\def\R{{\mathbb{R}}}
\def\C{{\mathcal{C}}}
\def\Cdyn{{\mathcal{C}^\text{dyn}}}
\def\Clag{{\mathcal{C}^\text{lag}}}
\def\dm{{\mathrm{d}}}
\def\Lo{{\mathcal{L}}}
\def\bdot{{\boldsymbol{\cdot}}}
\def\path{{\phi}}
\def\x{{\textnormal{x}}}
\def\am{{\mathrm{\alpha}}}
\def\bm{{\mathrm{\beta}}}
\newcommand\blfootnote[1]{%
  \begingroup
  \renewcommand\thefootnote{}\footnote{#1}%
  \addtocounter{footnote}{-1}%
  \endgroup
}
\begin{document}

\title{A Principle of Least Action for the Training of Neural Networks}
\titlerunning{A Principle of Least Action for the Training of Neural Networks}

\author{Skander Karkar*\inst{1} [\Letter] \and Ibrahim Ayed*\inst{2}\and Emmanuel de Bézenac* \inst{2} \and Patrick Gallinari \inst{1,2}}
\authorrunning{S. Karkar* et al.}
\institute{
Criteo AI Lab, Criteo, Paris, France \\ \email{as.karkar@criteo.com} \and 
LIP6, Sorbonne Université, Paris, France \email{$\{$ibrahim.ayed,emmanuel.de-bezenac,patrick.gallinari$\}$@lip6.fr}
}

\maketitle
\toctitle{A Principle of Least Action for the Training of Neural Networks}
\tocauthor{Skander~Karkar* \and Ibrahim~Ayed* \and Emmanuel~de~Bézenac* \and Patrick~Gallinari}
\setcounter{footnote}{0}

\begin{abstract}
\blfootnote{* denotes equal contribution.}
Neural networks have been achieving high generalization performance on many tasks despite being highly over-parameterized. Since classical statistical learning theory struggles to explain this behaviour, much effort has recently been focused on uncovering the mechanisms behind it, in the hope of developing a more adequate theoretical framework and having a better control over the trained models. In this work, we adopt an alternative perspective, viewing the neural network as a dynamical system displacing input particles over time. We conduct a series of experiments and, by analyzing the network's behaviour through its displacements, we show the presence of a low kinetic energy bias in the transport map of the network, and link this bias with generalization performance. From this observation, we reformulate the learning problem as follows: find neural networks that solve the task while transporting the data as efficiently as possible. This offers a novel formulation of the learning problem which allows us to provide regularity results for the solution network, based on Optimal Transport theory. From a practical viewpoint, this allows us to propose a new learning algorithm, which automatically adapts to the complexity of the task, and leads to networks with a high generalization ability even in low data regimes.
\keywords{Deep Learning \and Optimal Transport \and Dynamical Systems}
\end{abstract}

\section{Introduction}

Deep neural networks (DNNs) have repeatedly shown their ability to solve a wide range of challenging tasks, while often having many more parameters than there are training samples. Such a performance of over-parametrized models is counter-intuitive. They seem to adapt their complexity to the given task, systematically achieving a low training error without suffering from over-fitting as could be expected~\cite{tradeoff,double,rethinking_generalization}. This is in contradiction with the classical statistical practice of selecting a class of functions complex enough to represent the coherent patterns in the data, and simple enough to avoid spurious correlations~\cite{kernel_learning,statistical_learning}. Although this behavior has sparked much recent work towards explaining neural networks' success~\cite{simple_functions,ntk,sensitivity,spectral_bias}, it still remains poorly understood. Among the factors to consider are the implicit biases present in the choices made for the parametrization, the architecture, the parameter initialization and the optimization algorithm, and that contribute all to this success. Our aim in this work is to uncover some of these hidden biases and highlight their link with generalization performance through the lens of dynamical systems.

We will focus on residual networks~(ResNets)~\cite{he2016b,he2016a}, now ubiquitous in applications. This family of models has made it possible to learn very complex non-linear functions by improving the trainability of very deep networks, and has thus improved generalization. Links have been derived between these networks and dynamical systems: a ResNet can be seen as a forward Euler scheme discretization of an associated ordinary differential equation (ODE)~\cite{weinan}:

\begin{equation}
    \x_{k+1} = \x_k + v_k(\x_k)
    \; \longleftrightarrow \;
    \partial_t \x_t = v_t(\x_t)
    \label{eq:resnet}    
\end{equation}

This link has yielded many exciting results, \textit{e.g.} new architectures~\cite{lm} and reversible  networks~\cite{reversible}. Here, we make use of this analogy and analyze the behavior of residual networks by studying their associated differential flows. Adopting this dynamical point of view allows us to leverage the theories and mathematical tools developed to study, approximate and apply differential equations.

More specifically, we conduct experiments to observe how neural networks displace their inputs--seen as particles--through time. We measure a strong empirical correlation between good test performance and neural networks with low kinetic energy along their transport flow. From this, we reformulate the training problem as follows: retrieve the network which solves the task using the principle of least action, \textit{i.e.} expending as little kinetic energy as possible. This problem, in its probabilistic formulation, is tightly linked with and inspired by the well-known problem of finding an optimal transportation map \cite{Santambrogio}. This yields new insights into neural networks' generalization capabilities, and provides a novel algorithm that automatically adapts to the complexity of the data and robustly improves the network's performance, including in low data regimes, without slowing down the training. To summarize, our contributions are the following:
\begin{itemize}
    \item Through the dynamic viewpoint, we highlight the \textit{low-energy bias} of ResNets.
    \item We formulate a Least Action Principle for the training of Neural Networks.
    \item We prove existence and regularity results for networks with minimal energy.
    \item We provide an algorithm for retrieving minimal energy networks compatible with different architectures, which leads to better generalization performance on different classification tasks, without complexifying the architecture.
\end{itemize}

We introduce in Section~\ref{sec:back} some background on Optimal Transport~(OT) and highlight the link between the dynamical formulation of OT and ResNets. We describe in Section~\ref{sec:gen_sett} the general setting of our analysis. Section ~\ref{sec:emp_ot_an} provides empirical evidence illustrating our point. The formal framework of networks trained with minimized energy and a practical algorithm are described in Section~\ref{sec:leap}. Experiments on standard classification tasks are provided in Section~\ref{sec:expes}. The code is available online at \texttt{github.com/skander-karkar/LAP}.

\section{\label{sec:back}Background}

This section outlines the main elements of the formalism and reasoning of our work. Supplementary Material A gives more details about Optimal Transport.

\subsection{\label{subsec:back_ot}Optimal Transport}

The principle of least action is central to many fields in physics, mathematics and economics. It is found in classical and relativistic mechanics, thermodynamics, quantum mechanics~\cite{feynman,thermo,gray}, etc.. It broadly states that the dynamical trajectory of a system between an initial and final configuration is one that makes a certain action associated with the system locally stationary~\cite{gray}. One mathematical theory which can be associated with this general idea is the theory of Optimal Transport which was initially introduced as a way of finding a transportation map minimizing the cost of displacing mass from one configuration to another~\cite{Santambrogio}.

Formally, let $\am$ and $\bm$ be absolutely continuous distributions compactly supported in $\R^d$, and ${c: \R^d \times \R^d \to \R}$ a cost function. Consider a transportation map ${T: \R^d \to \R^d}$ that satisfies ${T_\sharp \alpha =\beta}$, \textit{i.e.} that pushes\footnote{$T_\sharp \alpha$ is the  \textit{push-forward measure}: $T_\sharp \alpha (B) = \alpha(T^{-1}(B))$ for any measurable set $B$.} $\am$ to $\bm$. The total cost of the transportation then depends on all the individual contributions of costs of transporting (infinitesimal) mass from each point $x$ to $T(x)$, and finding the optimal transportation map amounts to solving:
\begin{mini}
    {T}{ \mathcal{C}^{\text{stat}}(T) = \int_{\mathbb{R}^d} c(x, T(x))\dm \alpha(x)}
    {}{}
    \addConstraint{T_\sharp \alpha }{=\beta}
    \label{eq:monge}
\end{mini}
A standard choice for $c$ is the $p$-th power of a norm of $\R^d$, \textit{i.e.} $c(x, y) = \|x-y\|^p$, but other costs can be used, defining different variants of the problem. This cost induces, through the $p$-th root of the minimal value of~\eqref{eq:monge}, a distance $W_p$ between any two distributions $\alpha$ and $\beta$ of finite $p-$th moment, called the $p$-Wasserstein distance~\cite{cot}.

In \cite{brenier}, the link between Optimal Transport and the principle of least action was made by showing that the static transportation can equivalently be viewed as a dynamical one that minimizes an action as it gradually displaces particles of mass in time. In other words, instead of directly pushing samples of $\am$ to $\bm$ in $\mathbb{R}^d$ using $T$, we can displace mass from $\am$ according to a continuous flow with velocity ${v_t: \R^d \to \R^d}$. This implies that the density $\mu_t$ at time $t$ satisfies the \textit{continuity equation} ${\partial_t \mu_t + \nabla \cdot (\mu_t v_t) =0}$, assuming that initial and final conditions are given respectively by $\mu_0=\alpha$ and $\mu_1 = \bm$. In this case, the optimal displacement is the one that minimizes the action $\|v_t\|^p_{L^p(\mu_t)}$:
\begin{mini}
    {v}{ \Cdyn(v) = \int_0^1 \|v_t\|^p_{L^p(\mu_t)}\dm t}
    {}{}
    \addConstraint{\partial_t \mu_t + \nabla \cdot (\mu_t v_t)}{= 0, \mu_{0}= \am, \mu_{1}= \bm}
    \label{eq:dyn_ot}
\end{mini}
where $\|v_t\|^p_{L^p(\mu_t)}=\int_{\R^d}\| v_t(x)\|^p \dm \mu_t(x)$ for costs $c(x, y) = \|x-y\|^p$ with $p > 1$. In this case, minimizers exist and the two transport costs are the same, \textit{i.e.} $\mathcal{C}^{\text{stat}}(T) = \Cdyn(v)$ at the optimums. For $p=2$ and the Euclidean norm, the dynamical cost $\Cdyn(v)$ corresponds to the \textit{kinetic energy}.

\subsection{\label{subsec:back_otres}Link with Residual Networks}

The dynamical formulation in~\eqref{eq:dyn_ot} explicitly describes the evolution in time of the density $\mu_t$, starting from an input distribution $\am$. In this form, the link between deep residual networks and dynamical Optimal Transport is not clear. However, it is possible to adopt an alternate viewpoint which helps make it immediate. Instead of explicitly describing the density's evolution, we describe the paths $\path^x\colon [0, 1] \to \R^d\!, \: t \mapsto \path^x_t$ taken by particles from $\am$ at position $x$, when displaced along the flow $v$. The continuity equations can then equivalently be written as:
\begin{equation}
    \label{eq:continuity}
    \partial_t\path_t^x = v_t(\path_t^x)
\end{equation}

See chapters 4 and 5 of \cite{Santambrogio} for details. We can now note the resemblance between the residual network~\eqref{eq:resnet} and equation~\eqref{eq:continuity}. Rewriting the conditions as necessary, the dynamical formulation~\eqref{eq:dyn_ot} can equivalently be represented by:
\begin{mini}
    {v}{ \Clag(v) = \int_0^1 \|v_t\|_{L^p((\path^\bdot_t)_\sharp\am)}^p\, \dm t}
    {}{}
    \addConstraint{\partial_t\path_t^x}{ = v_t(\path_t^x)}
    \addConstraint{\path^\bdot_0}{= \text{id}}
    \addConstraint{(\path^\bdot_1)_\sharp\mathcal{\am} }{= \bm}
    \label{eq:ot_lag}
\end{mini}
where $\path^\bdot_t: x \in \R^d \mapsto \path^x_t \in \R^d$ corresponds to the transport map induced by the flow, up until time $t$. As both formulations are equivalent, we have that for any flow $v$, $\Clag(v) = \Cdyn(v)$. Moreover, optimal transportation plans in the static~\eqref{eq:monge} and dynamical~\eqref{eq:ot_lag} cases coincide: if $T$ and $\path^\bdot_t$, are respectively solutions to~\eqref{eq:monge} and~\eqref{eq:ot_lag}, we have that $T=\path^\bdot_1$.

This link allows us to associate residual networks with a local action for each layer, which induces a global transportation cost $\Clag$, and taking $p=2$ and the Euclidean norm allows us to refer to the network's kinetic energy. 

\section{\label{sec:gen_sett}General Setting}

In order to better understand the inner workings of a DNN, it is essential to adopt a viewpoint in which the different driving mechanisms become apparent and are decoupled.

\paragraph{Decomposing a DNN}
  We consider the following model of a deep neural network $f$ where computations are separated into the three steps, \textit{i.e.} $f=F \circ T \circ \varphi$ (this is similar to \cite{maxprinciple} and corresponds to the general structure of recent deep models or to the structure of components of a deep model~\cite{he2016a,resnext,wide}):
\begin{enumerate}
    \item \textbf{Dimensionality change:} Starting from an input distribution $\mathcal{D}$ in $\R^n$, a transformation $\varphi$ is applied, transforming it into $\alpha = \varphi_\sharp\mathcal{D}$, a distribution in $\R^d$. This corresponds to the first few layers present in most recent architectures and represents a change of dimensionality. $\varphi$ is known as the \textit{encoder}.
    \item \textbf{Data Transport:} Then $\alpha$ is transformed by a mapping $T: \R^d \to \R^d$, which we see as a transport map. Here, the dimensionality doesn't change and, if this part of the network is a sequence of residual blocks, $T$ can be written as the discretized flow of an ODE. 
    \item \textbf{Task-specific final layers:} A final function $F: \R^d \to \mathcal{Y}$ is applied to $T_\sharp\alpha$ in order to compute the loss $\mathcal{L}$ associated with the task at hand, \textit{e.g.} $F$ could be a perceptron classifier. Like $\varphi$, $F$ is typically made up of a few layers.
\end{enumerate}
The focus of this work is on analyzing the second phase, Data Transport, and we assume that the encoder $\varphi$ is pretrained and fixed (this will be relaxed in some experiments later). To solve a complex non-linear task for which a DNN is needed, the data has to be transformed in a non-trivial way, meaning that this is an essential phase, \textit{e.g.} in the case of classification, $T_\sharp\alpha$ needs to be linearly separable if $F$ is linear. This model is quite general, as many ResNet-based architectures \cite{resnext,wide} alternate modules that change the dimensionality (step 1) and transport modules that keep the dimensionality fixed (step 2) and according to \cite{iterative}, the transport modules have similar behaviour. The model can then be considered as a simplified ResNet, sometimes called a \textit{single representation} ResNet. Note that \cite{model-resolution} finds that networks that keep the same resolution remain competitive.

\paragraph{The set of admissible targets}
As recent neural architectures have systematically achieved near-zero training error~\cite{tradeoff,kernel_learning,ntk,rethinking_generalization}, we place ourselves in this regime, which makes it possible to model this as a hard constraint. For some tasks, this constraint over $T$ is obvious: in a generative setting for example, $T_\sharp\am$ must be equal to some prescribed distribution $\bm$ which is the target of the generation process. But in general, $T$ is less strictly constrained and the condition depends on $F$ and $\mathcal{L}$. This leads us to define a \textit{set of admissible targets} for the task:
\begin{equation}
    \label{adm_maps}
    S_{F,\mathcal{L}} = \{\beta\in\mathcal{P}(\R^d)\ |\ \mathcal{L}(F,\beta) = 0\}
\end{equation}
with $\bm=T_\sharp\am$. In general, $\mathcal{L}$ is fixed while $F$ is learned jointly with $T$. This set is supposed to be non-empty for some $F$ and, in general, it will contain many distributions. The goal of the learning task can then be reformulated as:
\begin{equation}
\label{gen_pbm_class}
    \text{Find } (T,F) \text{ such that } T_\sharp\alpha \in S_{F,\mathcal{L}}
\end{equation}
An important observation is that, even when $S_{F,\mathcal{L}}$ is reduced to a singleton, the problem is still strongly under-constrained and it is possible to obtain many such $(T,F)$ that lead to poor generalization. One can then ask why this is not the case in practice, as good generalization performance is usually achieved.

\paragraph{The case of classification}
Even though our framework is general, we focus our experiments on classification tasks, with $\Lo$ being the cross entropy loss. The task consists in separating $N$ classes. Let us denote $\alpha_i$ the class distributions which are supposed to be distributions in $\mathbb{R}^d$ of mutually disjoint supports, meaning that there is no ambiguity in the class of data points, and such that $\alpha = \sum_i\alpha_i / N$. One wants to find a transformation $T$ of these distributions such that all transported distributions can be correctly classified by a classifier $F$. When $F$ is linear, $S_{F,\mathcal{L}}$ is the set of distributions which have $N$ components that are linearly separated by $F$. Note that we place ourselves in a noiseless ideal setting where perfect classification is possible. 
The question we examine in this work is then twofold: 
\begin{itemize}
    \item What are the properties characterizing mappings reached by standard residual architectures with common hyper-parameters?
    \item Can we find a criteria to $\textit{automatically select}$ mappings with desirable properties in order to improve performance and robustness?
\end{itemize}

\section{\label{sec:emp_ot_an}Empirical Analysis of Transport Dynamics in ResNets}

Before introducing our framework, we conduct an exploratory analysis of the impact of the network's inner dynamics on generalization. We present below two experiments. The first one highlights how good generalization performance is closely related to low transport cost for classification tasks on MNIST and CIFAR10. This cost therefore appears as a natural characterisation of the complexity and disorder of a network. The second experiment, performed on a toy 2D dataset, visualizes the transport induced by the blocks of a ResNet.
 
We consider ResNets where, after encoding, a data point $x_0$ is transported by applying $x_{k+1} = x_k + v_k(x_k)$ for $K$ residual blocks and then classified using $F$. We measure the disorder/complexity of a network by its transport cost which is the sum of the displacements induced by its residual blocks: $\mathcal{C}(v) = \sum_k \| v_k(x_k) \|_2^2$. This quantity corresponds to the kinetic energy of the total displacement.

\paragraph{Transportation cost and generalization on MNIST and CIFAR10.} In order to study the correlation between the transport cost of a residual network and its generalization ability on image data, we train convolutional 9-block ResNets with different initializations~(orthogonal and normal with different gains), for 10-class classification tasks MNIST and CIFAR10. In Figure~\ref{fig:corr}, each point represents a trained network and gives the transport cost $\mathcal{C}$ as a function of the test accuracy of the network. This experiment clearly highlights the strong negative correlation between transport cost and good generalization. This illustrates the importance of the implicit initialization bias and motivates initialization schemes which favour a low kinetic energy. We believe a number of factors contribute to this low energy bias: small initialization gains tend to bias $\| v_k(x_k) \|_2^2$ towards small values, and training using gradient descent does not change this much.

\paragraph{Visualizing network dynamics on 2D toy data.} This experiment provides a 2D visualization  of the transport dynamics inside a network. The task is 2-class classification of a non-linearly separable dataset (two concentric circles, from \texttt{sklearn}) that contains 1000 points with a train-test split of 80$\%$-20$\%$, see Figure \ref{fig:circles} top left. The network is a ResNet containing 9 residual blocks, followed by a fixed linear classifier. Each residual block contains two fully connected layers separated by a batch normalization and a ReLU activation.

\begin{figure}[H]
\centering
\makebox[\textwidth][c]{\includegraphics[width=1.21\textwidth]{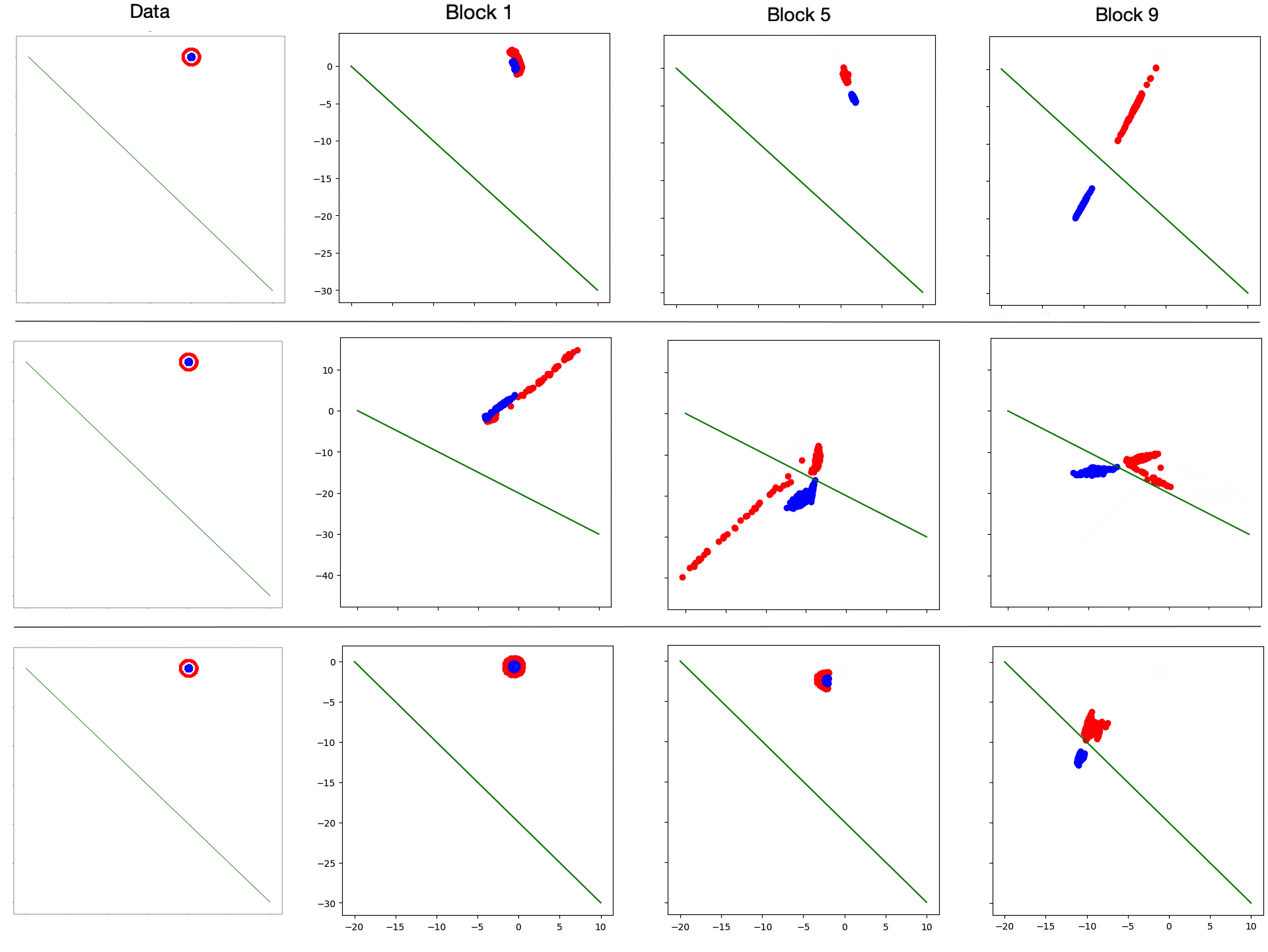}}
\caption{Transformed circles test set by a ResNet9 after blocks 1, 5 and 9 after training; first row with good initialization; second row with a $\mathcal{N}(0,5)$ initialization; third row with a $\mathcal{N}(0,5)$ initialization and the transport cost added to the loss}
\label{fig:circles}
\end{figure}

With the cross-entropy loss alone, the behaviour of a well trained and carefully initialized network achieving $100\%$ test accuracy is illustrated in the first row of Figure \ref{fig:circles}. With a $\mathcal{N}(0,5)$ initialization, significantly bigger than an \say{optimal} initialization, the test accuracy drops to $98\%$~(average of 100 runs) and the transport becomes chaotic~(Figure~\ref{fig:circles}, second row). Adding the transport cost to the loss improves the test accuracy~($99.7\%$ on average) of this badly initialized network and the movement becomes more controlled~(third row of Figure \ref{fig:circles}). Thus, controlling transport improves the behavior and generalization ability of the network. This allows to explicitly control the network whereas implicit biases such as \say{good} initialization rely on heuristics. In Supplementary Material C.4, more experiments show that in other situations that deviate from the ideal setting where the task is perfectly solved, e.g. when using a network which is too large or too small, or a small training set, controlling the transport cost also improves generalization.

\begin{figure}[H]
\centering
\makebox[\textwidth][c]{\includegraphics[width=1.45\textwidth]{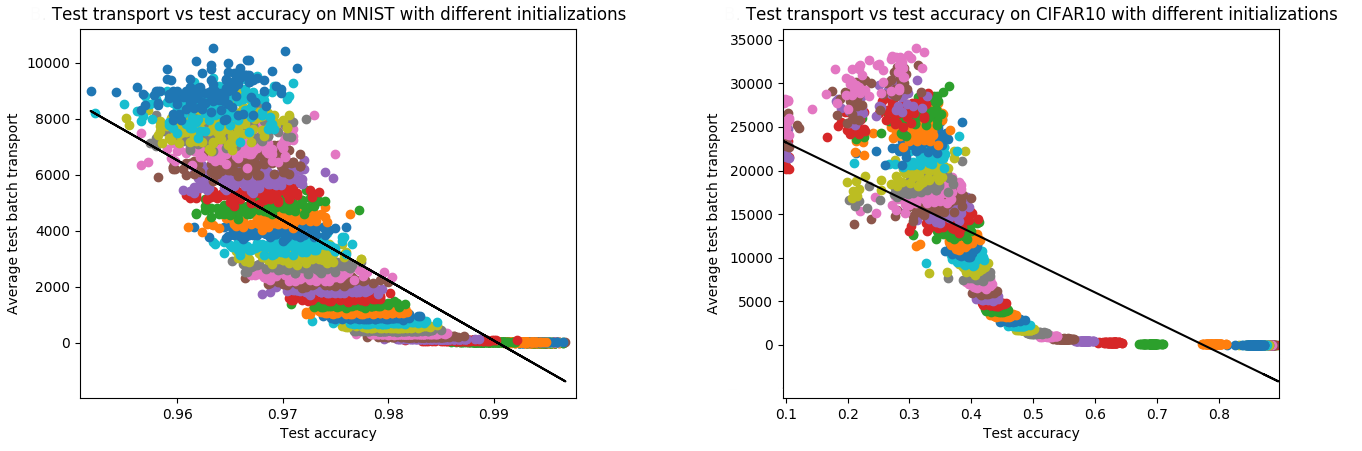}}
\caption{Test transport against test accuracy of ResNet9 models on MNIST (left) and CIFAR10 (right) with fitted linear regressions, where each color indicates a different initialization (either orthogonal or normal with varying gains)}
\label{fig:corr}
\end{figure}

\section{Least Action Principle for Training Neural Networks}
\label{sec:leap}

The previous section has shed some light on the low energy bias of networks as well as on its potential benefits on test accuracy. In this section, we take a step further and make this implicit bias explicit by considering a formulation for training that enforces minimal kinetic energy, closely related to the problem of Optimal Transport. This allows us to prove the existence of minimizers, and exhibit interesting regularity properties of the minimal energy neural networks which may explain good generalization performance. 


\subsection{Formulation}

We consider costs $c(x,y)=\|x-y\|^p$ (where $\|.\|$ is a norm of $\R^d$), with $p>1$, and suppose that $\am\in\mathcal{P}_p(\R^d)$~(the set of absolutely continuous measures on $\R^d$ with finite $p$-th moment). We assume that the space of classifiers is compact, that the loss $\mathcal{L}$ is continuous, that the set $\cup_{F\in \mathcal{F}} S_{F,\mathcal{L}}$ is at a finite $p$-Wasserstein distance $W_p$ from $\am$~(in particular, it is non-empty) and that all its bounded subsets are totally bounded (\textit{i.e.} can be covered by finitely many subsets of any fixed size). These properties depend on the choice of the loss $\mathcal{L}$ and of a class of functions $\mathcal{F}$ for the classifier $F$.

Returning to the transport problem as defined in Section~\ref{subsec:back_ot}, a natural way to select a robust model, given the empirical observations of Section~\ref{sec:emp_ot_an}, is to select, among the maps which transport $\am$ to $S_{F,\mathcal{L}}$ and thus solve the task, one with a minimal transport cost. This gives us the following optimization problem:
\begin{equation}
\label{eq:ot_classif}
\begin{aligned}
& \underset{T, F}{\text{inf}}
& & \mathcal{C}(T) = \int_{\mathbb{R}^d} c(x, T(x)) \dm \am(x)\\
& \text{subject to}
& & T_\sharp\am \in S_{F,\mathcal{L}}
\end{aligned}
\end{equation}

The equivalent dynamical version for $c(x,y)=\|x-y\|^p$ is, as per Section \ref{subsec:back_otres},
\begin{equation}
\label{eq:ot_classif_dyn}
\begin{aligned}
& \underset{v, F}{\text{inf}}
& & \int_0^1 \|v_t\|_{L^p((\path^\bdot_t)_\sharp\am)}^p\, \dm t\\
& \text{subject to}
& & \partial_t\path_t^x = v_t(\path_t^x)\\
& & & \path^\bdot_0= \text{id}\\
& & & (\path^\bdot_1)_\sharp\am \in S_{F,\mathcal{L}}
\end{aligned}
\end{equation}

where $\|v_t\|_{L^p((\path^\bdot_t)_\sharp\am)}^p=\int_{\R^d}\| v_t \|^p \dm (\path^\bdot_t)_\sharp\am$. The result below shows that these two problems are equivalent and that the infima are realized as minima:
\begin{theorem}
\label{th:existence}
The infima of \eqref{eq:ot_classif} and \eqref{eq:ot_classif_dyn} are finite and are realized through a map $T$ which is~(or a velocity field $v$ which induces) an optimal transportation map. When $c(x,y)=\|x-y\|^p$, then \eqref{eq:ot_classif} and \eqref{eq:ot_classif_dyn} are equivalent.
\end{theorem}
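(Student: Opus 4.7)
The plan is to first solve the static problem~\eqref{eq:ot_classif} by a direct method in the Wasserstein space, and then to transfer the conclusion to the dynamical problem~\eqref{eq:ot_classif_dyn} via the Benamou-Brenier identity already recalled in Section~\ref{subsec:back_otres}.

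For the static problem, I would first observe that the inner minimization over $T$ can be carried out explicitly. For any absolutely continuous $\am\in\mathcal{P}_p(\R^d)$ and target $\bm\in\mathcal{P}_p(\R^d)$, the infimum of $\int c(x,T(x))\dm\am(x)$ over maps satisfying $T_\sharp\am = \bm$ equals $W_p(\am,\bm)^p$ and is attained by a (uniquely determined $\am$-a.e.) optimal transport map, by the Brenier/Gangbo-McCann theorem applicable to the strictly convex cost $\|\cdot\|^p$ with $p>1$. Problem~\eqref{eq:ot_classif} therefore reduces to minimizing $\bm\mapsto W_p(\am,\bm)^p$ over $\bm\in\cup_{F\in\mathcal{F}} S_{F,\mathcal{L}}$, and the assumed finiteness of this Wasserstein distance to $\am$ immediately yields finiteness of the infimum.

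Next, I would extract a minimizing sequence $(F_n,\bm_n)$ with $\bm_n\in S_{F_n,\mathcal{L}}$ and $W_p(\am,\bm_n)^p$ tending to the infimum. Since the $\bm_n$ are $W_p$-bounded, they lie in a bounded subset of $\cup_F S_{F,\mathcal{L}}$, which is totally bounded by assumption; combined with the completeness of $(\mathcal{P}_p(\R^d), W_p)$~\cite{completeness}, this yields precompactness and a convergent subsequence $\bm_n\to\bm^*$ in $W_p$. Compactness of $\mathcal{F}$ then provides a further subsequence with $F_n\to F^*$, and continuity of $\mathcal{L}$ (jointly with $W_p$-convergence) passes the constraint to the limit: $\mathcal{L}(F^*,\bm^*) = 0$, so $\bm^*\in S_{F^*,\mathcal{L}}$. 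Continuity of $W_p(\am,\cdot)$ then forces $W_p(\am,\bm^*)^p$ to equal the infimum, and a final application of the Brenier-type theorem produces an optimal transport map $T^*$ from $\am$ to $\bm^*$, making $(T^*,F^*)$ a minimizer of~\eqref{eq:ot_classif}. For the dynamical formulation~\eqref{eq:ot_classif_dyn}, I would then invoke the Benamou-Brenier theorem: for the cost $\|x-y\|^p$ with $p>1$, the infimum of $\int_0^1\|v_t\|^p_{L^p(\mu_t)}\dm t$ over fields transporting $\am$ to a fixed $\bm$ along the continuity equation equals $W_p(\am,\bm)^p$ and is attained by a velocity $v^*$ whose flow at $t=1$ recovers an optimal transport map. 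Applied to $\bm^*$, this shows that~\eqref{eq:ot_classif_dyn} shares the same infimum as~\eqref{eq:ot_classif} and admits a minimizing pair $(v^*,F^*)$ with $\path^{\bdot}_1 = T^*$.

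The main obstacle is the limit-feasibility step: passing from $\mathcal{L}(F_n,\bm_n)=0$ to $\mathcal{L}(F^*,\bm^*)=0$ without losing the constraint. This rests crucially on the joint continuity of $\mathcal{L}$ in the product of the compact topology on $\mathcal{F}$ and the $W_p$-topology on targets, together with the totally-bounded hypothesis that upgrades $W_p$-boundedness into precompactness within $\cup_F S_{F,\mathcal{L}}$. Once this compactness argument is secured, the rest is an essentially mechanical assembly of classical optimal transport results.
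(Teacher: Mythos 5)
Your proposal is correct and follows essentially the same route as the paper's proof: a direct method in $(\mathcal{P}_p(\R^d),W_p)$ using total boundedness plus completeness for precompactness of the targets, compactness of the classifier class, continuity of $\mathcal{L}$ to pass the constraint to the limit, existence of the optimal map to the limiting target, and the Benamou--Brenier identity for the dynamical equivalence. The only (cosmetic) difference is that you first reduce~\eqref{eq:ot_classif} to minimizing $\bm\mapsto W_p(\am,\bm)^p$ over $\cup_F S_{F,\mathcal{L}}$ before extracting the minimizing sequence, whereas the paper works directly with the sequence $(T_i,F_i)$ and its push-forwards.
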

\begin{proof}
From the hypothesis above, there exists $\bm\in S_{F,\mathcal{L}}$ at a finite distance from $\am$. Taking any transport map between $\am$ and $\bm$, we see that the infima are finite.

Consider \eqref{eq:ot_classif} and take a minimizing sequence $(T_i,F_i)_i$. Set $\bm_i = (T_i)_\sharp\am$. Then $(\C(T_i))_i$ converges to the infimum which is strictly bounded by $M>0$. Then, by definition, for $i$ large enough, $W^p_p(\am,\bm_i)\leq \C(T_i) \leq M$.
So that $(\bm_i)_i$ is a bounded sequence in $\cup_F S_{F,\mathcal{L}}$. By the hypothesized total boundedness of bounded subsets and as $\mathcal{P}_p(\R^d)$ endowed with $W_p$ is a complete metric space~(see \cite{completeness} for a proof), up to an extraction, $(\bm_i)_i$ converges to $\bm^\star$ in the closure of $\cup_F S_{F,\mathcal{L}}$. Moreover, up to an extraction, $(F_i)_i$ also converges to $F^\star$ by compactness of the class of classifiers. Taking $T^\star$ the OT map between $\am$ and $\bm^\star$ (see Supplementary Material A for existence of OT maps), we then have, by continuity of $\mathcal{L}$,
$$T^\star_\sharp\am = \bm^\star \in S_{F^\star,\mathcal{L}}$$
and $\C(T^\star)\leq \lim \C(T_i)$ by optimality of $T^\star$, which means, since $(\C(T_i))_i$ is a minimizing sequence, that $\C(T^\star)$ minimizes \eqref{eq:ot_classif}. So $(T^\star,F^\star)$ is a minimizer and $T^\star$ is an OT map.

Finally, there exists, by dynamical OT theory ~(Supplementary Material A), a velocity field $v^\star_t$ inducing the OT map between $\am$ and $\bm^\star$ which then gives a minimizer $(v^\star,F^\star)$ for \eqref{eq:ot_classif_dyn}. By the same reasoning, taking a minimizing sequence $(v^{(i)},F_i)_i$ and the induced maps $T_i$ shows that both problems are equivalent.\qed
\end{proof}

Note that uniqueness doesn't hold anymore, as the constraint $ T_\sharp\am \in S_{F,\mathcal{L}}$ in \eqref{eq:ot_classif_dyn} is looser than in standard OT. However, as we show in the following section, the fact that the optimization problems are solved by OT maps will give regularity properties for the models induced by these optimization problems.

\subsection{\label{subsec:ot_reg}Regularity}

Intuitively, the fact that we minimize the energy of the transport map transforming the data is akin to the core idea of Occam's razor: among all the possible networks that correctly solve the the task, the one transforming the data in the simplest way is selected. Moreover, it is possible to show that this optimal transformation is regular: our formulation provides an alternate view on generalization for modern deep learning architectures in the overparametrized regime.

Optimal maps can be as irregular as needed in order to fit the target distribution, however in much the same way as successfully trained DNNs, optimal maps are still surprisingly regular. In a way, they are as regular as possible given the constraints which is exactly the type of flexibility needed. However, the constraints in \eqref{eq:ot_classif} and \eqref{eq:ot_classif_dyn} are looser than in the standard definitions of Optimal Transport. Still, supposing that the input data distribution has a nicely behaved density, namely bounded and of compact support, with the same hypothesis as above, we have the following, which is mainly a corollary of Theorem~\ref{th:existence}:
\begin{proposition}
\label{prop:reg_class}
Consider $T^\star$ the OT map induced by \eqref{eq:ot_classif}~(or \eqref{eq:ot_classif_dyn}) given by Theorem~\ref{th:existence}. Take $X$, respectively $Y$, an open neighborhood of the support of $\am$, respectively of $T^\star_\sharp\am$, then $T^\star$ is differentiable, except on a set of null $\am$ measure.

Additionally, if $T^\star$ doesn't have singularities, there exists $\eta>0$ and $A$, respectively $B$, relatively closed in $X$, respectively $Y$, such that $T^\star$ is $\eta$-Hölder continuous from $X\setminus A$ to $Y\setminus B$. Moreover, if the two densities are smooth, $T^\star$ is a diffeomorphism from $X\setminus A$ to $Y\setminus B$.

\end{proposition}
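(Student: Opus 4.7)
The plan is to reduce the statement to classical regularity theory for Optimal Transport by leveraging Theorem~\ref{th:existence}, which already asserts that $T^\star$ is a genuine OT map between $\am$ and $\bm^\star = T^\star_\sharp\am$ for the cost $c(x,y)=\|x-y\|^p$. Once this reduction is in hand, the three conclusions follow from standard results as essentially direct corollaries; the only part requiring extra thought is how to handle the implicit nature of the target $\bm^\star$.

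For the almost everywhere differentiability, I would use the fact that an OT map for a strictly convex cost is given by the (generalized) gradient of a $c$-concave Kantorovich potential $\psi$ (Brenier's theorem for $p=2$, the Gangbo--McCann extension for general $p>1$). Such a potential is semiconvex, hence by Aleksandrov's theorem it is twice differentiable Lebesgue-almost everywhere on an open neighborhood of its support. Since $\am$ is absolutely continuous with respect to Lebesgue measure, the set where $\nabla\psi$ fails to exist has null $\am$-measure, yielding the first claim.

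For the Hölder and diffeomorphism statements I would invoke the Caffarelli--Ma--Trudinger--Wang theory (see~\cite{mtw,figalli2017monge}). The standing hypothesis that $\am$ has bounded density and compact support, combined with the fact that $\bm^\star$ inherits compact support through the finiteness of the transport cost, places us in the framework of partial regularity: there exist relatively closed singular sets $A\subset X$ and $B\subset Y$ outside of which the MTW conditions can be enforced, so that $T^\star$ is $\eta$-Hölder continuous from $X\setminus A$ to $Y\setminus B$ for some $\eta>0$, and upgrades to a smooth diffeomorphism when the densities are themselves $C^\infty$.

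The main subtlety, and the reason we must assume that $T^\star$ has no singularities for the second and third claims, is that $\bm^\star$ is not prescribed but determined implicitly by the optimization~\eqref{eq:ot_classif}. In the classical setup, global regularity requires $c$-convexity of the target support with respect to the source, and nothing in our looser problem guarantees that the support of $\bm^\star$ is so well behaved; singular sets can genuinely appear, which is precisely why the statement allows for the exceptional sets $A$ and $B$. The no-singularity hypothesis is the minimal condition that sidesteps this obstacle and enables a direct transfer of the MTW machinery. Beyond identifying this caveat, no further work is needed, since Theorem~\ref{th:existence} already places $T^\star$ squarely within the scope of the standard regularity theorems.
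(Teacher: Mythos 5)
Your overall strategy is the same as the paper's: Theorem~\ref{th:existence} certifies that $T^\star$ is a bona fide OT map between $\am$ and $\bm^\star=T^\star_\sharp\am$, and the conclusions are then read off from classical regularity theory ($\am$-a.e.\ differentiability via the $c$-concave potential for the first claim, partial regularity for the Hölder and diffeomorphism claims). Two points in your write-up deserve correction, though. First, the results needed for the second and third claims are the Caffarelli--De Philippis--Figalli partial regularity theorems (Theorems~4.23--4.24 and Remark~4.25 of \cite{figalli2017monge}, which is what Supplementary Material~B quotes), not the Ma--Trudinger--Wang theory: MTW \cite{mtw} concerns smooth costs satisfying a fourth-order structural condition, the paper explicitly places it out of scope, and for $c(x,y)=\|x-y\|^p$ the partial regularity statement does not proceed by ``enforcing the MTW conditions outside the singular set''; the singular sets $A,B$ come from the possible non-convexity of the target support, not from a failure of MTW. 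Second, your assertion that $\bm^\star$ ``inherits compact support through the finiteness of the transport cost'' is false --- finite cost only yields a finite $p$-th moment --- and, more importantly, the cited regularity theorems require the \emph{target} density to exist and be bounded away from zero and infinity on a bounded open set. Nothing in Theorem~\ref{th:existence} guarantees this for the implicitly determined $\bm^\star$, which is obtained as a $W_p$-limit and need not even be absolutely continuous; this hypothesis has to be added or verified. In fairness, the paper's own one-line proof glosses over the same point, assuming niceness only of the input density, so this is a shared gap rather than one specific to your argument.
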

\begin{proof}
This is a consequence of Theorem~\ref{th:existence}, the hypothesis made in this section and the regularity theorems stated in Supplementary Material B. 
\qed
\end{proof}

There are two main results in Proposition \ref{prop:reg_class}: the first gives $\am$-$a.e.$ differentiability. This is already as strong as might be expected from a classifier:  there are necessarily discontinuities at the frontiers between different classes. The second is even more interesting: it gives Hölder continuity over as large a domain as possible, and even a diffeomorphism if the data distribution is well-behaved enough. We recall that a function $f$ is $\eta$-Hölder continuous for $\eta\in ]0,1]$ if $\exists \ M > 0$ such that $ \|f(x)-f(y)\|\leq M\|x-y\|^\eta$ for all $x,y$. $\eta$ measures the smoothness of $f$, the higher its value, the better. In particular, in the case of classification, this means that the Hausdorff dimension along the frontiers between the different classes is scaled by less than a factor of $1/\eta$ in the transported domain. If the densities are smooth, the dimension even becomes provably smaller by this result.

Intuitively, this means that, in these models, the data is transported in a way that preserves and simplifies the patterns in the input distribution. In the following, we propose a practical algorithm implementing these models and use it for standard classification tasks, showing an improvement over standard models.

\subsection{Practical Algorithm\label{subsec:pract-algo}}

We propose an algorithm for training ResNets using the least action principle by minimizing the kinetic energy. Starting from problem \eqref{eq:ot_classif_dyn} with $p=2$ and the Euclidean norm, we first discretize the differential equation via a forward Euler scheme, which yields ${\path_{k+1}^x = \path_k^x + v_k(\path_k^x)}$. The discretized flow $v_k$ is parameterized by a residual block, giving a standard residual architecture. The residual blocks, along with a classifier $F$, are parametrized by $\theta$. Next, the constraint $(\path^\bdot_1)_\sharp\mathcal{\am} \in S_{F,\mathcal{L}}$ is rewritten as $\mathcal{L}(F, (\path^\bdot_1)_\sharp\mathcal{\am}) = 0$, denoted $\Lo(\theta)=0$ below. Finally, as we only have access to a finite set $\mathcal{X}$ of samples $x$ from $\am$, we use a Monte-Carlo approximation of the integral \textit{w.r.t} the distributions $(\path^\bdot_t)_\sharp\mathcal{\am}$, to obtain:

\begin{mini}
    {\theta}{ \mathcal{C}(\theta) = \sum_{x \in \mathcal{X}}\ \sum_{k=0}^{K-1} \|v_k(\path_k^x) \|_2^2}
    {}{}
    \addConstraint{\path_{k+1}^x}{= \path_k^x + v_k(\path_k^x)}
    \addConstraint{\path^x_0}{= x, \ \forall \ x \in \mathcal{X}}
    \addConstraint{\mathcal{L}(\theta)}{=0}
    \label{eq:ot_classif_discr}
\end{mini}

Is is easy to see that the min-max problem $\min_\theta \max_{\lambda>0} \ \mathcal{C}(\theta) + \lambda \ \Lo(\theta)$ yields the same solution, as the first two constraints are satisfied trivially. If the constraint $\Lo(\theta) = 0$ corresponding to solving the task, which includes the classifier $F$, is not verified, this will cause the second term to grow unbounded, and the solution will thus be avoided by the minimization. This min-max problem can be solved using an iterative approach, starting from some initial $\lambda_0$ and $\theta_0$:

\begin{equation}
\left\{
\begin{aligned}
\theta_{i + 1} &= \arg \min_{\theta} \ \mathcal{C}(\theta) + \lambda_i \ \Lo(\theta) \\
\lambda_{i+1} &= \lambda_i + \tau \ \Lo(\theta_{i+1}) 
\end{aligned}
\right.
\label{eq:uzawa}
\end{equation}

The minimization is done via SGD for a number of steps $s$, where a step means a batch, starting from the previous parameter value $\theta_i$. This algorithm is similar to Uzawa's algorithm used in convex optimization \cite{Santambrogio}. In practice, it is more stable to divide the minimization objective in \eqref{eq:uzawa} by $\lambda_i$, yielding: \\

\renewcommand{\thealgocf}{}
\begin{algorithm}[H]
\DontPrintSemicolon
\caption{{Training neural networks with Least Action Principle (LAP-Net)}}
{\bfseries Input:} Training samples, step size $\tau$, number of steps $s$, initial weight $\lambda_0$ \;
{\bfseries Initialization:} Initialize the parameters $\theta_0$ and set $i=0$ \;
\While{not converged}{
1. Starting from $\theta_i$, perform $s$ steps of stochastic gradient descent: \;
 \ \ \ \ 1.1. $\theta^0_{i+1} = \theta_{i}$ \;
 \ \ \ \ 1.2. $\theta^{l}_{i+1} = \theta^{l-1}_{i+1} - \epsilon ( \nabla \mathcal{C}(\theta^{l-1}_{i+1}) / \lambda_i + \nabla \Lo(\theta^{l-1}_{i+1}))$ for $l$ from $1$ to $s$ \;
 \ \ \ \ 1.3. $\theta_{i+1} = \theta^s_{i+1}$ \;
2. Update the weight $\lambda_{i+1} = \lambda_i + \tau \ \Lo(\theta_{i+1})$ and increment $i \leftarrow i + 1$ \;
}
{\bfseries Output:} Learned parameters $\theta$ \;
\label{alg}
\end{algorithm}

While the high non-convexity makes it difficult to ensure exact optimality, we can still have some induced regularity when reaching a \say{good} local minimum:
\begin{proposition}
Suppose $(F^{\theta^\star},T^{\theta^\star})$ is reached by the optimization algorithm such that $T^{\theta^\star}$ is an $\epsilon-$OT map between $\am$ and its push-forward\footnote{By this, we mean that $\|T^{\theta^\star}-T^\star\|_\infty\leq\epsilon$ where $T^\star$ is the OT map.}. Then we have, with the same notations as in Proposition \ref{prop:reg_class},
\[
\forall x,y\in X\setminus A, \|T^{\theta^\star}(x)-T^{\theta^\star}(y)\|\leq O(\epsilon + \|x-y\|^\eta)
\]
\end{proposition}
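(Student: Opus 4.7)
The plan is to reduce the statement to the already established regularity of the true optimal transport map via a triangle inequality argument. Concretely, let $T^\star$ be the true OT map between $\am$ and $\bm^\star = T^\star_\sharp \am$ (the push-forward of $T^{\theta^\star}$, say) provided by Theorem~\ref{th:existence}. By Proposition~\ref{prop:reg_class}, under the standing assumptions (bounded density, compact support, no singularities), there exist relatively closed sets $A \subset X$ and $B \subset Y$, together with an exponent $\eta \in (0,1]$ and a constant $M>0$, such that
\[
\|T^\star(x) - T^\star(y)\| \leq M \|x - y\|^\eta \quad \text{for all } x,y \in X\setminus A.
\]

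Next, I would invoke the $\epsilon$-OT hypothesis in the form stated in the footnote, namely $\|T^{\theta^\star} - T^\star\|_\infty \leq \epsilon$. For any $x,y \in X \setminus A$, I insert $T^\star(x)$ and $T^\star(y)$ and apply the triangle inequality:
\[
\|T^{\theta^\star}(x) - T^{\theta^\star}(y)\| \leq \|T^{\theta^\star}(x) - T^\star(x)\| + \|T^\star(x) - T^\star(y)\| + \|T^\star(y) - T^{\theta^\star}(y)\|.
\]
The first and third terms are bounded by $\epsilon$, and the middle term is bounded by $M \|x-y\|^\eta$ by the Hölder estimate above. Combining these gives
\[
\|T^{\theta^\star}(x) - T^{\theta^\star}(y)\| \leq 2\epsilon + M\|x-y\|^\eta = O\bigl(\epsilon + \|x-y\|^\eta\bigr),
\]
which is the claimed bound.

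There is no real obstacle once the $\epsilon$-OT hypothesis is interpreted via the sup-norm footnote and the regularity of $T^\star$ is quoted from Proposition~\ref{prop:reg_class}; the content of the proposition is essentially that uniform closeness to an OT map transfers Hölder regularity up to an additive $O(\epsilon)$ slack. The only point worth flagging is that the set $A$ here must be the one associated with the OT map $T^\star$ between $\am$ and the realized push-forward $T^{\theta^\star}_\sharp \am$ (not an abstract singular set), so implicitly one uses that this push-forward lies in (or close to) $S_{F^{\theta^\star},\mathcal{L}}$ and satisfies the density/support conditions required to apply Proposition~\ref{prop:reg_class}; this is consistent with the setting in which the algorithm reaches a good local minimum.
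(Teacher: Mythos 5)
Your proof is correct and follows exactly the paper's argument: insert $T^\star(x)$ and $T^\star(y)$, apply the triangle inequality, bound the outer terms by $\epsilon$ via the sup-norm hypothesis, and bound the middle term by the H\"older estimate from Proposition~\ref{prop:reg_class}. Your closing remark about which singular set $A$ is meant is a reasonable clarification but does not change the argument.
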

\begin{proof}
We simply write the decomposition: $$T^{\theta^\star}(x)-T^{\theta^\star}(y) = T^{\theta^\star}(x)-T^\star(x)+T^\star(x)-T^\star(y)+T^\star(y)-T^{\theta^\star}(y)$$
and use the triangular inequality: the first and third terms are smaller than $\epsilon$ by hypothesis while Hölder continuity applies for the second by Proposition \ref{prop:reg_class}.
\qed
\end{proof}
This shows that minimizing the transport cost still endows the model with some regularity, even in situations where the global minimum is not reached.

\section{\label{sec:expes} Experiments}

\paragraph{MNIST Experiments}

The base model is a ResNet with 9 residual blocks. Two convolutional layers first encode the image of shape $1\times28\times28$ into shape $32\times14\times14$. A residual block contains two convolutional layers, each preceded by a ReLU activation and batch normalization. The classifier is made up of two fully connected layers separated by batch normalization and a ReLU activation. We use an orthogonal initialization \cite{orth} with gain $0.01$. This and all vanilla models and their training regimes are implemented by following closely the cited papers that first introduced them and our method is added over these training regimes. More implementation details are in Supplementary Material C.3.

When using the entire training set, the task is essentially solved ($99.4\%$ test accuracy). We penalize the transport cost as presented in Section \ref{subsec:pract-algo}, using $\lambda_0 = 5$, $\tau = 1$ and $s = 5$. The performance barely drops ($99.3\%$ test accuracy), and we can visualise the preservation of information from the point of view of a pretrained autoencoder (see Supplementary Material C.1). From the experiments in two dimensions, we suspect that adding the transport cost helps when the training set is small. For performance comparisons, we average the highest test accuracy achieved over 30 training epochs (over random orthogonal weight initializations and random subsets of the complete training set). We find that adding the transport cost improves generalization when the training set is very small (Table \ref{tab:mnist}). We see that the improvement becomes more important as the training set becomes smaller and reaches an increase of almost 14 percentage points in the average test accuracy.

\begin{table}[H]
\centering
\begin{tabular}{ccc}
\toprule
Training set size & ResNet & LAP-ResNet~(Ours)  \\ \midrule \\ [-1.1em]
500 & 90.8, [90.4, 91.2] & \textbf{90.9}, [90.7, 91.1]  \\ \\ [-1.1em]
400 & 88.4, [88.0, 88.8] & \textbf{88.4}, [88.0, 88.8]  \\ \\ [-1.1em]
300 & 83.5, [83.0, 84.1] & \textbf{86.2}, [85.8, 86.6]  \\ \\ [-1.1em]
200 & 74.9, [73.9, 75.9] & \textbf{82.0}, [81.5, 82.5]  \\ \\ [-1.1em]
100 & 56.4, [54.9, 58.0] & \textbf{70.0}, [69.0, 71.0]  \\ \bottomrule
\end{tabular}
\caption{Average highest test accuracy and 95$\%$ confidence interval of ResNet9 over 50 instances on MNIST with training sets of different sizes (in $\%$)} 
\label{tab:mnist}
\end{table}

\paragraph{CIFAR10 Experiments}

We run the same experiments on CIFAR10. The architecture is exactly the same except that the encoder transforms the input which is of shape $3\times32\times32$ into shape $100\times16\times16$. For our method, we use $\lambda_0 = 0.1$, $\tau = 0.1$ and $s = 50$. We average the highest test accuracy achieved over 200 training epochs over random orthogonal weight initializations and random subsets of the complete train set. Here, we find that adding the transport cost helps for all sizes of the train set~(which has 50 000 images in total). The increase in average precision becomes more important as the train set becomes smaller~(Table \ref{tab:cifar10}).

\begin{table}[H]
\centering
\begin{tabular}{ccc}
\toprule
Training set size & ResNet & LAP-ResNet~(Ours)  \\ \midrule \\ [-1.1em]
50 000 & 91.49, [91.40, 91.59]  & \textbf{91.94}, [91.84, 92.04] \\ \\[-1.1em]
30 000 & 88.61, [88.47, 88.75] & \textbf{89.41}, [89.31, 89.50] \\ \\[-1.1em]
20 000 & 85.73, [85.59, 85.87] & \textbf{86.74}, [86.61, 86.87] \\ \\[-1.1em]
10 000 & 79.25, [79.00, 79.49] & \textbf{80.90}, [80.74, 81.06] \\ \\[-1.1em]
5 000 & 70.32, [70.00, 70.63] & \textbf{72.58}, [72.36, 72.79] \\ \\[-1.1em]
4 000 & 67.80, [67.55, 68.07] & \textbf{70.12}, [69.81, 70.42] \\ \\[-1.1em]
 \bottomrule
\end{tabular}
\caption{Average highest test accuracy and 95$\%$ confidence interval of ResNet9 over 20 instances on CIFAR10 with training sets of different sizes (in $\%$)} 
\label{tab:cifar10}
\end{table}

\paragraph{CIFAR100 experiments}

On CIFAR100, results using a ResNet are in Supplementary Material C.2. We also used the ResNeXt~\cite{resnext} architecture: the residual block of a ResNeXt applies $x + \sum_i w_i(x)$ with the functions $w_i$ having the same architecture but independent weights, followed by a ReLU activation. We used the ResNeXt-50-32$\times$4d architecture detailed in \cite{resnext}. This is a much bigger and state-of-the-art network, as compared with the single representation ResNet used so far. It also extends the experimental results beyond the theoretical framework in three ways: the embedding dimension changes between the residual blocks, a block applies $x_{k+1} = \texttt{ReLU}(x_k + \sum_i w_{k,i}(x_k))$ and the encoder is no longer fixed. We found that penalizing $\sum_i w_{k,i}(x_k)$ or $x_{k+1} - x_k$ is essentially equivalent. Table \ref{tab:cifar100-resnext} shows consistent accuracy gains as our method (with $\lambda_0 = 1$, $\tau = 0.1$ and $s = 5$) corrects a slight overfitting of the bigger ResNeXt compared to ResNet. 

\begin{table}[H]
\centering
\begin{tabular}{ccc}
\toprule
Training set size & ResNeXt & LAP-ResNeXt~(Ours)  \\ \midrule \\ [-1.1em]
50 000 & 72.97, [71.79, 74.14]  & \textbf{76.11}, [75.32, 76.89] \\ \\[-1.1em]
25 000 & 62.55, [60.18, 64.92] & \textbf{64.11}, [62.25, 65.96] \\ \\[-1.1em]
12 500 & 45.90, [43.16, 48.67] & \textbf{48.23}, [46.39, 50.07] \\ \bottomrule
\end{tabular}
\caption{Average highest test accuracy and 95$\%$ confidence interval of ResNeXt50 over 10 instances on CIFAR100 with training sets of different sizes (in $\%$)} \label{tab:cifar100-resnext}
\end{table}

An important observation is that adding the transport cost significantly reduces the variance in the results. This is expected as the model becomes more constrained and can be seen as an advantage, especially in cases where the results vary more with the initialization~(\textit{e.g.} transfer learning). This is illustrated by the width of the $95\%$ confidence intervals in the tables above often becoming narrower when the transport cost is penalized. Finally, we could also have considered a relaxation of the optimization program by considering a fixed weight $\lambda$, which provides a simpler and quite competitive benchmark (see Supplementary Material C.2). The training's progress is shown there as well, and we see that the training is not slowed down by our method.

\section{Related work}

That ResNets~\cite{he2016b,he2016a} are naturally biased towards minimally transforming their input, especially for later blocks and deeper networks, is already shown in \cite{iterative}, which found that earlier blocks learn new representations while later blocks only slowly refine those representations. \cite{hauser} found that the deeper the network the more its blocks minimally move their input. Both were inspirations for this work. 
The ODE point of view of ResNets has inspired new architectures~\cite{reversible,imexnet,lm,pde}. Others were inspired by numerical schemes to improve stability, e.g. \cite{reversible} add a penalty term that encourages the weights to vary smoothly from layer to layer and \cite{small-step} replicate an Euler scheme and study the effect of diminishing the discretization step-size. More recently, \cite{noderobustness} accelerate the training of \cite{node}'s model for generative tasks using the link with dynamical transport. But most often, regularization is achieved by penalization of the weights~(e.g. spectral norm regularization~\cite{spectral}, smoothly varying weights \cite{reversible}).

OT theory was used in~\cite{dae} to analyse deep gaussian denoising autoencoders~(not necessarily implemented through residual networks) as transport systems. In the continuous limit, they are shown to transport the data distribution so as to decrease its entropy. Closer to this work, the dynamical formulation of OT is used in \cite{oudt} for the problem of unsupervised domain translation. 

\section{Discussion and Conclusion}

In this work, we have studied the behavior of ResNets by adopting a dynamical systems perspective. This viewpoint leverages the vast literature in this field. 

More specifically, we have analyzed ResNets' complexity through the lens of the transport cost induced by the data displacement across the model's blocks. We find that due to a certain number of factors, this transport cost is biased towards small values. Moreover, this cost is negatively correlated to test accuracy, which has brought us to consider explicitly minimizing it. This leads us to present a novel generic formulation for training neural networks, based on the least action principle, closely related to the problem of Optimal Transport: amongst all the neural networks that correctly solve the task, select the one that transforms the data with the lowest cost. Note that even though we have only considered residual networks as they induce an ODE flow, this framework can be applied to any architecture by considering the static formulation~\eqref{eq:ot_classif} of the problem. 

We have proven general results of existence and regularity for models trained within our framework, studied their behaviour in low-dimensional settings when compared to vanilla models and shown their efficiency on standard classification tasks. We also found that the training is stabilized in an adaptive fashion without being slowed down. 

An important property of our method which is yet to be tested and is hinted at by the regularity results and by the lower variance in the performances is the robustness of the models, more specifically in adversarial contexts. This will be one important venue of future work. Another interesting avenue of research would be to experiment with alternative transportation costs.  

\bibliographystyle{abbrv}
\bibliography{main-all}

\clearpage
\appendix

\section{\label{app_sec:ot_theory}Some Elements of Optimal Transport Theory}

We state here the most important results of Optimal Transport theory and its dynamical formulation. Our main reference is \cite{Santambrogio}. \cite{villani} is another classical reference. The dynamical formulation of OT has been of great importance, both theoretically and practically. It stems mainly from the work of Benamou and Brenier~\cite{brenier}.

\subsection{\label{subsec:ot_basics}Optimal Transport}

OT studies the task of \say{transporting} mass from one configuration to another while minimizing the effort as described by a certain ground cost $c$. Let $\alpha$ and $\beta$ be two absolutely continuous distributions. The Monge formulation of OT is:
\begin{mini}
    {T}{ \mathcal{C}(T) = \int_{\mathbb{R}^d} c(x, T(x))\dm \alpha(x)}
    {}{}
    \addConstraint{T_\sharp \alpha }{=\beta}
    \label{eq:monge_app}
\end{mini}

We then have the following result, proven for example in Theorem~1.17 of \cite{Santambrogio}, which gives a condition on the cost under which problem~\eqref{eq:monge_app} has a unique minimum.
\begin{theorem}
    $\alpha$, $\beta$ absolutely continuous measures on $\mathbb{R}^d$. If $c(x,y)=h(x-y)$, with $h$ strictly convex, then there exists a unique $T$ such that $\mathcal{C}(T)$ is minimal.
\end{theorem}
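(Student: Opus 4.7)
The plan is to proceed via the classical Kantorovich relaxation of \eqref{eq:monge_app}, and then exploit the strict convexity of $h$ to show that the optimal coupling is in fact induced by a map, which settles existence and uniqueness at once.

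First I would drop the non-linear Monge constraint $T_\sharp\alpha=\beta$ in favour of the linear constraint of being a coupling, i.e.\ minimize $\int c(x,y)\,\dm\gamma(x,y)$ over $\gamma\in\Pi(\alpha,\beta)$. The set $\Pi(\alpha,\beta)$ is tight since both marginals are fixed, narrowly closed, and the functional is narrowly lower semi-continuous because $c$ is continuous and bounded below on compact-supported data (or after a standard truncation argument in the unbounded case). Prokhorov's theorem then furnishes an optimal coupling $\gamma^\star$.

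Next I would invoke Kantorovich duality to obtain a $c$-concave potential $\varphi$ whose $c$-transform $\varphi^c$ satisfies $\varphi(x)+\varphi^c(y)=c(x,y)$ on $\mathrm{supp}(\gamma^\star)$. Since $c(x,y)=h(x-y)$ with $h$ strictly convex, the map $x\mapsto c(x,y)$ is locally semi-concave uniformly in $y$, and hence $\varphi$ inherits local semi-concavity; in particular $\varphi$ is differentiable Lebesgue-a.e., which becomes $\alpha$-a.e.\ by absolute continuity of $\alpha$. At any such point $x$, for every $y$ with $(x,y)\in\mathrm{supp}(\gamma^\star)$, the first-order condition forces $\nabla\varphi(x)=\nabla h(x-y)$. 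The strict convexity of $h$ is decisive here: $\nabla h$ is strictly monotone and hence injective, so $y$ is uniquely recovered as $y=x-(\nabla h)^{-1}(\nabla\varphi(x))$. Consequently $\gamma^\star$ is concentrated on the graph of a measurable map $T$, which is admissible for \eqref{eq:monge_app} and therefore Monge-optimal. Uniqueness then follows by the standard convexity trick: if $T_1,T_2$ were two distinct Monge minimizers, the half-sum $\tfrac{1}{2}(\gamma_{T_1}+\gamma_{T_2})$ would be a Kantorovich optimum not concentrated on a graph, contradicting what we just proved.

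The step I expect to be the main obstacle is the a.e.\ differentiability of the $c$-concave potential $\varphi$. For the quadratic cost $h(z)=\tfrac{1}{2}\|z\|^2$ this is immediate via Brenier's theorem because $c$-concavity reduces to ordinary concavity and Rademacher applies directly. For a general strictly convex $h$ one has to verify the Gangbo--McCann twist condition on $c$ and derive local semi-concavity of $x\mapsto c(x,y)$, which I would either cite from \cite{Santambrogio} (this is essentially the ambient theorem) or establish by a short Legendre-transform computation. Modulo this regularity input, strict convexity of $h$ is precisely the ingredient that makes the first-order condition invertible and yields a well-defined map.
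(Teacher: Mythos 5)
Your proposal is correct and follows essentially the argument the paper itself relies on: the paper offers no proof of this statement beyond citing Theorem~1.17 of \cite{Santambrogio}, whose proof is precisely the Kantorovich-relaxation / duality / twist-condition route you describe (existence of an optimal plan, a.e.\ differentiability of the potential, inversion of the first-order condition via strict convexity, and the half-sum trick for uniqueness). One small repair to your "main obstacle" step: for a general strictly convex $h$ the map $x\mapsto h(x-y)$ need not be locally semi-concave (e.g.\ $h(z)=\|z\|^{3/2}$ fails this near $z=0$), but this is harmless, since a convex $h$ is locally Lipschitz, hence on compact supports the family $c(\cdot,y)$ is uniformly Lipschitz, the $c$-concave potential $\varphi$ is Lipschitz, and Rademacher already gives the $\alpha$-a.e.\ differentiability you need; strict convexity then inverts the first-order condition through the disjointness of the subdifferentials $\partial h(z_1)$ and $\partial h(z_2)$ for $z_1\neq z_2$, which covers the case where $\nabla h$ is not everywhere defined.
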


\subsection{\label{subsec:dyn_form}Dynamical Formulation}

Instead of directly pushing samples of $\am$ to $\bm$ in $\mathbb{R}^d$, we can view $\am$ and $\bm$ as points in a space of measures, and consider trajectories from $\am$ to $\bm$ in this space.A way to transport the probability mass from $\am$ to $\bm$ is a curve between two points in this space. The curve corresponding to the optimal mapping is the \textit{shortest} one, in other words it is the \textit{geodesic curve} between $\am$ and $\bm$. More formally, we introduce the \textit{Wasserstein metric space} $\mathbb{W}_p(\mathbb{R}^d)$, \textit{i.e.} the space of absolutely continuous measures of $\mathbb{R}^d$ with finite $p$-th moment endowed with the Wasserstein distance:
\[
W_p(\mu,\nu) = \min_{T_\sharp \mu=\nu} \C(T)^{\frac{1}{p}}
\]
when costs $c(x,y) = \|x-y\|_q^p$ are considered, for $q,p > 1$. The OT map can then be seen as a trajectory of minimal length between $\am$ and $\bm$, in other words a \textit{geodesic}. The following result~(from Theorem 5.27 of \cite{Santambrogio}) motivates this approach:
\begin{theorem}
$\mathbb{W}_p$ is a geodesic space, meaning that, for any measures $\mu, \nu\in\mathbb{W}_p$, there exists a geodesic curve $(\mu_t)_{ t \in [0, 1]}$ between $\mu$ and $\nu$.
\end{theorem}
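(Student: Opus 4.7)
The plan is to construct an explicit geodesic by McCann's \emph{displacement interpolation}. Given $\mu,\nu\in\mathbb{W}_p$, both are absolutely continuous with finite $p$-th moment, and the cost $c(x,y)=\|x-y\|_q^p$ with $p>1$ is of the form $h(x-y)$ with $h$ strictly convex. The preceding theorem therefore supplies a unique optimal transport map $T$ with $T_\sharp\mu = \nu$. I would then propose the candidate curve
\[
\mu_t := (T_t)_\sharp\mu, \qquad T_t(x) := (1-t)\,x + t\,T(x), \qquad t\in[0,1],
\]
which trivially satisfies the endpoint conditions $\mu_0=\mu$ and $\mu_1=\nu$, and reduces the problem to verifying the isometry identity $W_p(\mu_s,\mu_t) = |t-s|\,W_p(\mu,\nu)$ for all $s,t\in[0,1]$.

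For the upper bound I would exhibit the natural coupling: $(T_s,T_t)_\sharp\mu$ is an admissible transport plan between $\mu_s$ and $\mu_t$, so a direct computation gives
\[
W_p^p(\mu_s,\mu_t) \;\le\; \int_{\mathbb{R}^d} \|T_s(x)-T_t(x)\|_q^p\,\mathrm{d}\mu(x) \;=\; |t-s|^p \int_{\mathbb{R}^d} \|x - T(x)\|_q^p\,\mathrm{d}\mu(x) \;=\; |t-s|^p\,W_p^p(\mu,\nu),
\]
using $T_s(x)-T_t(x) = (s-t)(x - T(x))$. Taking $p$-th roots yields $W_p(\mu_s,\mu_t)\le|t-s|\,W_p(\mu,\nu)$.

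The matching lower bound comes from the triangle inequality for $W_p$: assuming $0\le s\le t\le 1$,
\[
W_p(\mu,\nu) \;\le\; W_p(\mu,\mu_s) + W_p(\mu_s,\mu_t) + W_p(\mu_t,\nu) \;\le\; \bigl(s + (t-s) + (1-t)\bigr)\,W_p(\mu,\nu) \;=\; W_p(\mu,\nu).
\]
The outer inequality is an equality, so every intermediate inequality must saturate, forcing in particular $W_p(\mu_s,\mu_t) = |t-s|\,W_p(\mu,\nu)$, which is the defining property of a constant-speed geodesic and completes the proof.

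The delicate point is the existence of the optimal map $T$, which I would simply invoke from the previous theorem thanks to absolute continuity of $\mu$ and strict convexity of $h$. If one wanted to remove the absolute continuity assumption, the argument would have to be rephrased in terms of an optimal transport \emph{plan} $\gamma\in\Pi_{\mathrm{opt}}(\mu,\nu)$, setting $\mu_t := (\pi_t)_\sharp\gamma$ with $\pi_t(x,y)=(1-t)x+ty$; the upper and lower bound computations are then formally identical, with $\int\|\pi_s-\pi_t\|_q^p\,\mathrm{d}\gamma$ playing the role of the coupling cost above.
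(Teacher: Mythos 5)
Your proof is correct, and it is worth noting that the paper itself offers no argument for this statement at all: it simply cites Theorem~5.27 of Santambrogio's book. What you supply is exactly the standard displacement-interpolation construction that underlies that cited result: take the optimal map $T$ furnished by the preceding theorem (absolute continuity of $\mu$ plus strict convexity of $h(z)=\|z\|_q^p$ for $p,q>1$), interpolate linearly via $T_t=(1-t)\,\mathrm{id}+tT$, get the upper bound $W_p(\mu_s,\mu_t)\le|t-s|\,W_p(\mu,\nu)$ from the coupling $(T_s,T_t)_\sharp\mu$, and force equality by the triangle inequality. The computation $T_s(x)-T_t(x)=(t-s)(x-T(x))$ has a harmless sign slip in your write-up (you wrote $(s-t)$), which disappears under $|\cdot|^p$. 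The one point you gloss over is that the curve must actually live in $\mathbb{W}_p$ as the paper defines it, i.e.\ each $\mu_t$ must itself be \emph{absolutely continuous}; finiteness of the $p$-th moment is immediate, but absolute continuity of the interpolant $(T_t)_\sharp\mu$ for $0<t<1$ is a genuine (if classical) fact, proved via the $\mu$-a.e.\ injectivity of $T_t$ coming from cyclical monotonicity of the optimal map. Your closing remark about passing to optimal plans $\gamma$ and $\mu_t=(\pi_t)_\sharp\gamma$ is the right way to state the result without absolute continuity of the endpoints, and is indeed how the general geodesic-space statement is usually proved; your map-based version buys a more concrete and self-contained argument in the setting the paper actually works in.
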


Thus, according to this result, finding the optimal mapping between two distributions amounts to finding a curve of minimal length in a certain abstract measure space. However, it still does not provide much in the way of a practically useful algorithm. The following theorem makes a formal link with fluid dynamics and basically states that moving probability masses from one distribution to another is the same as moving fluid densities from one configuration to another under a certain velocity field~\cite{Santambrogio}:
\begin{theorem} \label{thm:dynamic}
Given $\am$ and $\bm$ absolutely continuous \textit{w.r.t.} the Lebesgue measure and $(\mu_t)_{t \in [0, 1]}$ the geodesic curve with $\mu_0 = \am$ and $\mu_1 = \bm$, we can associate a vector field $v_t\in L^p(\mu_t)$ that solves the \textit{continuity equation}\footnote{$\partial_t$ is the partial derivative operator \textit{w.r.t.} variable $t$, and $\nabla \cdot$ the divergence operator \textit{w.r.t.} space.}:
\[
\partial_t\mu_t + \nabla\cdot(\mu_t v_t) = 0
\]
with:
\[
W^p_p(\am,\bm) = \int_0^1\|v_t\|^p_{L^p(\mu_t)}\mathrm{dt}
\]
\end{theorem}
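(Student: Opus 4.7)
The plan is to explicitly construct a vector field realizing the Wasserstein distance via displacement interpolation, and then to verify both the continuity equation (in weak form) and the action identity by a change of variables.

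First, I would apply the previous theorem in this appendix to the strictly convex cost $h(z)=\|z\|^p$ with $p>1$ to obtain the unique optimal transport map $T$ between $\am$ and $\bm$. I then define the displacement interpolation $X_t(x) = (1-t)x + tT(x)$ and set $\mu_t = (X_t)_\sharp \am$. A standard argument, using the convexity of $z\mapsto\|z\|^p$ to rule out any shortcut, shows that $(\mu_t)_{t\in[0,1]}$ is the constant-speed geodesic in $\mathbb{W}_p$ connecting $\am$ to $\bm$, which matches the hypothesis of the theorem.

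Second, I would define the candidate velocity field on the support of $\mu_t$ by the Lagrangian prescription $v_t(X_t(x)) = T(x)-x$. Well-posedness requires $X_t$ to be $\am$-a.e.\ injective; for $p=2$ this is immediate from Brenier's representation $T=\nabla\phi$ with $\phi$ convex, since $X_t = \nabla\bigl((1-t)\|\cdot\|^2/2 + t\phi\bigr)$ is the gradient of a uniformly convex function for $t<1$, and an analogous polar factorization handles general $p>1$. The continuity equation is then checked in distributional form: for a test $\varphi \in C^\infty_c(\R^d)$,
\[
\frac{\dm}{\dm t}\int \varphi \, \dm\mu_t = \int \nabla\varphi(X_t(x)) \cdot (T(x)-x) \, \dm\am(x) = \int \nabla\varphi \cdot v_t \, \dm\mu_t,
\]
which is the weak form of $\partial_t\mu_t + \nabla\cdot(\mu_t v_t)=0$, with the differentiation under the integral justified by dominated convergence.

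Third, the action identity would follow from a direct change of variables,
\[
\int_0^1 \|v_t\|_{L^p(\mu_t)}^p \, \dm t = \int_0^1 \!\! \int \|T(x)-x\|^p \, \dm\am(x) \, \dm t = \int \|T(x)-x\|^p \, \dm\am(x) = W_p^p(\am,\bm),
\]
the last equality being the definition of $W_p$ combined with the optimality of $T$. I expect the main obstacle to be establishing the injectivity and measurability of $X_t$ needed to define $v_t = (T-\mathrm{id})\circ X_t^{-1}$ unambiguously: this is precisely where the absolute continuity of $\am$ is essential, as it guarantees that $T$ is the gradient of a convex potential and is differentiable $\am$-a.e., so the Lagrangian-to-Eulerian passage is rigorous rather than merely formal.
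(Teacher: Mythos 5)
Your proposal is correct and coincides with the standard proof of this result, which the paper itself does not prove but simply quotes from Santambrogio's book: McCann's displacement interpolation $X_t=(1-t)\,\mathrm{id}+tT$ pushed forward from $\am$, together with the Lagrangian velocity $v_t=(T-\mathrm{id})\circ X_t^{-1}$, the weak-form verification of the continuity equation, and the change of variables giving $\int_0^1\|v_t\|^p_{L^p(\mu_t)}\,\dm t=\int\|T(x)-x\|^p\,\dm\am(x)=W_p^p(\am,\bm)$. You also correctly single out the one genuinely delicate point, namely the $\am$-a.e.\ injectivity of $X_t$ for $t<1$ (Brenier's potential for $p=2$, cyclical monotonicity of the optimal plan for general $p>1$), so nothing further is needed.
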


In other words, the geodesic curve $(\mu_t)_{t \in [0, 1]}$ between the two distributions and the minimal energy velocity vector field $v$ solve the continuity equation. Moreover, the energy along this path is precisely equal to the Wasserstein distance $W^p_p(\am, \bm)$. If this vector field of minimal energy $v$ could be obtained, probability mass could be displaced according to the flow defined by the continuity equation, and the geodesic curve could be retrieved. Thus, we can reformulate the problem as a problem of optimal control, where $v$ is the control variate:
\begin{mini}
    {v}{ \Cdyn(v) = \int_0^1 \|v_t\|^p_{L^p(\mu_t)}\dm t}
    {}{}
    \addConstraint{\partial_t \mu_t + \nabla \cdot (\mu_t v_t)}{= 0, \mu_{0}= \am, \mu_{1}= \bm}
    \label{eq:dyn_supp}
\end{mini}

\section{\label{app_sec:regularity}Regularity of Optimal Transport Maps}

In this section, we recall some classical and more recent results of regularity for Optimal Transport mappings. This is an intricate subject and the problem was open for some time after OT theory had been established. The most important results have been established through the study of the Monge-Ampère equation by Caffarelli then De Philippis and Figalli. Extensions for larger families of costs were developed by Ma, Trudinger and Wang~\cite{mtw} but this is out of the scope of this work. In particular, Theorem~6.27 of \cite{ambrosio} gives a classical almost-everywhere regularity result:
\begin{theorem}
If $c(x,y) = \|x-y\|^p$ for $p>1$, and $\am$ and $\bm$ have compact supports with $d(\text{supp}(\am),\text{supp}(\bm))>0$, then the optimal transportation map $T$ between $\am$ and $\bm$ is $\am-a.e.$ differentiable and its Jacobian $\nabla T(x)$ has non-negative eigenvalues $\am-a.s.$
\end{theorem}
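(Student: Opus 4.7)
The plan is to reduce the statement to the classical regularity theory of $c$-concave Kantorovich potentials, as developed in \cite{Santambrogio,ambrosio}. Since $c(x,y)=\|x-y\|^p$ with $p>1$ is strictly convex in $x-y$, the Gangbo--McCann extension of Brenier's theorem produces a unique OT map $T$ pushing $\am$ to $\bm$. The separation hypothesis $d(\mathrm{supp}(\am),\mathrm{supp}(\bm))>0$ is the decisive technical input: it confines the pairs $(x,T(x))$ to an open set bounded away from the diagonal, on which $c$ is smooth (for $p\in(1,2)$, $c$ is only $C^1$ at the diagonal, so this hypothesis is essential rather than cosmetic). On this set the twist condition holds, and $T$ admits the representation $T(x)=x-(\nabla h)^{-1}(\nabla\varphi(x))$, with $h(z)=\|z\|^p$ and $\varphi$ a $c$-concave Kantorovich potential.

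The next step is $\am$-a.e.\ second-order differentiability of $\varphi$. On an open neighborhood $X$ of $\mathrm{supp}(\am)$ that remains separated from $\mathrm{supp}(\bm)$, the potential $\varphi$ is expressible as an infimum of smooth functions $x\mapsto c(x,y)-\varphi^c(y)$ indexed by $y\in\mathrm{supp}(\bm)$; the separation yields a uniform bound on the Hessians $D^2 h(x-y)$ on $X$, and hence local semi-concavity of $\varphi$ with a quantitative constant. Alexandrov's theorem applied to this semi-concave $\varphi$ then gives twice differentiability Lebesgue-a.e.\ on $X$, and since $\am\ll\mathcal{L}^d$ by hypothesis, this is $\am$-a.e. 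Smoothness of the $c$-exponential map $w\mapsto x-(\nabla h)^{-1}(w)$ on the separated domain propagates this to $\am$-a.e.\ differentiability of $T$ itself.

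For the sign of the eigenvalues of $\nabla T$, I would differentiate the optimality condition $\nabla h(x-T(x))=\nabla\varphi(x)$ implicitly at points of twice differentiability of $\varphi$, obtaining $\nabla T(x)=I-A(x)^{-1}B(x)$ with $A(x):=D^2 h(x-T(x))\succeq 0$ and $B(x):=D^2\varphi(x)$. The $c$-concavity inequality $\varphi(x')\leq c(x',T(x))-\varphi^c(T(x))$, which is saturated at $x'=x$, forces $x$ to be a local minimum of $x'\mapsto c(x',T(x))-\varphi^c(T(x))-\varphi(x')$, and taking the Hessian gives the matrix inequality $B(x)\preceq A(x)$. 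Then $A^{-1}B$ is conjugate (via $A^{1/2}$) to the symmetric matrix $A^{-1/2}BA^{-1/2}$, whose eigenvalues are real and lie in $[0,1]$ since $0\preceq B\preceq A$; consequently the eigenvalues of $\nabla T(x)=I-A^{-1}B$ are real and non-negative.

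The main obstacle will be the semi-concavity step for general $p>1$: when $p=2$ the argument collapses to the classical Brenier chain, where $\varphi$ is literally (up to a quadratic shift) convex and Alexandrov's theorem applies without further work, but for $p\neq 2$ one must quantify the semi-concavity constant of $\varphi$ in terms of the $C^2$-norm of $c$ on the separated domain, and it is precisely there that the hypothesis $d(\mathrm{supp}(\am),\mathrm{supp}(\bm))>0$ is indispensable, as the semi-concavity constant would otherwise degenerate on the diagonal for $p<2$. The positivity of $\nabla T$'s eigenvalues, by contrast, is a soft consequence of $c$-cyclical monotonicity encoded in $B\preceq A$ and does not require further analytic input.
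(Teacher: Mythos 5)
This theorem is imported background: the paper does not prove it but cites it as Theorem~6.2.7 of Ambrosio--Gigli--Savar\'e, so there is no in-paper argument to compare against. Your sketch reproduces the standard Gangbo--McCann/AGS proof correctly in all essential steps --- uniqueness via strict convexity of $h(z)=\|z\|^p$, the representation $T=\mathrm{id}-(\nabla h)^{-1}\circ\nabla\varphi$, semi-concavity of the $c$-concave potential on a neighbourhood of $\mathrm{supp}(\alpha)$ thanks to the uniform Hessian bound that the separation of supports provides, Alexandrov's theorem, and the Hessian inequality at the saturated point of the $c$-concavity inequality --- and you correctly identify why the separation hypothesis is indispensable for $p\neq 2$. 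One minor imprecision: the lower bound $0\preceq B(x)=D^2\varphi(x)$ is neither justified nor needed; non-negativity of the eigenvalues of $\nabla T(x)=I-A^{-1}B$ follows from $B\preceq A$ alone, since that already bounds the eigenvalues of $A^{-1/2}BA^{-1/2}$ above by $1$, whereas $B\succeq 0$ would give the extra (unclaimed, and not generally available without further argument) conclusion that the eigenvalues of $\nabla T$ are at most $1$.
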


More recently, results summarized below, which correspond to Theorems~4.23, 4.24 and Remark~4.25 of \cite{figalli2017monge}, state that the optimal transportation map has one degree of regularity more than the initial transported density:

\begin{theorem}
Suppose there are $X,Y$, bounded open sets, such that the densities of $\am$ and $\bm$ are null in their respective complements and bounded away from zero and infinity over them respectively. Then, if $Y$ is convex, there exists $\eta>0$ such that the OT map $T$ between $\am$ and $\bm$ is $C^{0,\eta}$ over $X$. If $Y$ isn't convex, there exists two relatively closed sets $A,B$ in $X,Y$ respectively, such that $T\in C^{0,\eta}(X\setminus A,Y\setminus B)$, where $A$ and $B$ are of null Lebesgue measure.

Moreover, if the densities are in $C^{k,\eta}$, then $C^{0,\eta}$ can be replaced by $C^{k+1,\eta}$ in the conclusions above. In particular, if the densities are smooth, then the transport map is a diffeomorphism~(between the reduced input and target domains if the target support is not convex). 
\end{theorem}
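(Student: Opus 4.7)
The plan is to reduce this classical statement to the regularity theory for the Monge--Ampère equation developed by Caffarelli and, in the non-convex target case, extended by De Philippis and Figalli. For the quadratic cost, Brenier's theorem writes the optimal map as $T = \nabla u$ with $u$ convex; for a $p$-th power of a norm, the analogous reduction uses a $c$-concave potential and proceeds with only cosmetic modifications. The push-forward condition $T_\sharp \am = \bm$ then becomes, via a change-of-variables formula, the Monge--Ampère equation
\[
\det D^2 u(x) \;=\; \frac{f(x)}{g(\nabla u(x))}
\]
in the Alexandrov sense, where $f, g$ are the densities of $\am, \bm$. By the hypothesis that the two densities are null outside $X, Y$ and bounded away from zero and infinity on them, the right-hand side lies in a compact interval $[\lambda, \Lambda] \subset (0, \infty)$ on $X$.

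Next I would invoke Caffarelli's theorem on strict convexity and $C^{1,\eta}$ interior regularity of Alexandrov solutions to Monge--Ampère with right-hand side bounded away from zero and infinity, valid when the image domain $Y$ is convex. This immediately yields $T = \nabla u \in C^{0,\eta}(X)$ for some $\eta \in (0,1)$ depending only on the dimension and on $\lambda, \Lambda$. When $Y$ is not convex, strict convexity of $u$ can fail along a set whose image is some $B \subset Y$; the partial regularity theory of De Philippis--Figalli then provides relatively closed sets $A \subset X$, $B \subset Y$ of null Lebesgue measure such that $u$ is strictly convex on $X \setminus A$ and $T \in C^{0,\eta}(X \setminus A, Y \setminus B)$, giving the first part of the theorem.

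To upgrade to higher regularity under a $C^{k,\eta}$ assumption on the densities, I would bootstrap with Caffarelli's Schauder-type estimates for Monge--Ampère: once $\nabla u \in C^{0,\eta}$, the composition $g \circ \nabla u$ is $C^{0,\eta}$, hence so is the right-hand side of the equation, which promotes $u$ to $C^{2,\eta}$ and thus $T$ to $C^{1,\eta}$; iterating $k$ times yields $T \in C^{k+1,\eta}$. In the smooth case, combining $C^\infty$ regularity with injectivity of $\nabla u$ on the set of strict convexity (and invertibility of $D^2 u$ via the Monge--Ampère equation with positive right-hand side) gives that $T$ is a diffeomorphism between the reduced domains. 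The main analytic obstacle is Caffarelli's $C^{1,\eta}$ regularity itself, whose proof rests on a delicate geometric analysis of sections of convex Alexandrov solutions (normalized via John's lemma) and cannot be shortened meaningfully; locating the singular sets $A, B$ in the non-convex target case is the second major input and requires the more recent partial-regularity machinery. With these external results in hand, the conclusion is precisely the direct assembly carried out in Theorems~4.23, 4.24 and Remark~4.25 of \cite{figalli2017monge}, which I would invoke to conclude.
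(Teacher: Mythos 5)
The paper does not prove this statement itself: it is quoted verbatim as a known result, with the ``proof'' being the citation of Theorems~4.23, 4.24 and Remark~4.25 of \cite{figalli2017monge}, and your sketch correctly reconstructs exactly the machinery behind that citation (Brenier potential, Monge--Amp\`ere with right-hand side in $[\lambda,\Lambda]$, Caffarelli's $C^{1,\eta}$ and Schauder theory for convex $Y$, De Philippis--Figalli partial regularity otherwise) before deferring to the same reference. The only caveat is your aside that the $p$-th power of a general norm follows ``with only cosmetic modifications'': that case genuinely requires the Ma--Trudinger--Wang theory and is not a cosmetic change, but since the cited theorems and the paper's own treatment are for the quadratic Euclidean cost, this does not affect the argument as used.
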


\section{\label{app_sec:results}Additional Results}

\subsection{\label{app_subsec:decodings}Visualization of the Transport on MNIST}

If we pretrain an autoencoder on MNIST, we can use its encoder as the encoder of the ResNet and freeze it during training. This makes it possible to visualize the transport of the data by decoding, using the pretrained decoder, the output of each residual block. We show this below on MNIST. In Figure \ref{fig:decodings-mnist-notra}, we see the decodings of a basic ResNet trained to achieve $99.4\%$ test accuracy. 

\begin{figure}[H]
\centering
\makebox[\textwidth][c]{\includegraphics[width=1.25\textwidth]{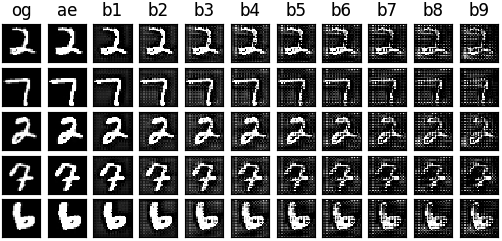}}
\caption{Decodings of the internal representations (the outputs of the residual blocks) after training a ResNet9 on MNIST (og: original image, ae: encoding, b1: output of block 1...)}
\label{fig:decodings-mnist-notra}
\end{figure}

We add the transport cost with $\lambda_0 = 5$, $\tau = 1$ and $s = 5$. The performance barely drops ($99.3\%$ test accuracy) but we can see in Figure \ref{fig:decodings-mnist-uzawa} the effect of the regularization as the decodings change much less. 

\begin{figure}[H]
\centering
\makebox[\textwidth][c]{\includegraphics[width=1.25\textwidth]{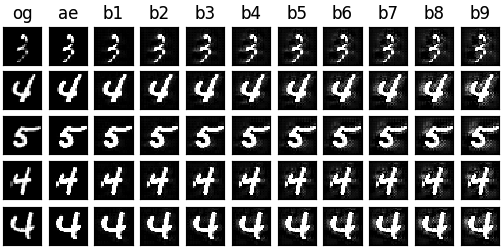}}
\caption{Decodings of the internal representations (the outputs of the residual blocks) after training a LAP-ResNet9 on MNIST (og: original image, ae: encoding, b1: output of block 1...)}
\label{fig:decodings-mnist-uzawa}
\end{figure}

\subsection{\label{app_subsec:lambdaresults}Additional Results with Fixed $\lambda$}

In this section, we show some additional experimental results with a model where, instead of using an adaptive optimization algorithm, we simply take the transport cost as a regularizer, thus giving us a minimization objective:
\[
\mathcal{L}(\theta) +  \lambda \ \ \mathcal{C}(\theta)
\]
This is an easier and more straightforward method, simply considering a relaxed constraint in the optimization problem. Aside from the advantage of simpler implementation, it allows for easier fine-tuning of the regularization hyper-parameter which is useful when the datasets and networks are big. The adaptivity is lost but this still leads to better test accuracy than the non-regularized networks. Results on the same tasks as in Section 6 are below.

\begin{table}[H]
\centering
\hspace*{-2.2cm} 
\begin{tabular}{cccc}
\toprule
Training set size & ResNet & LAP-ResNet & Regularized ResNet, $\lambda = 0.2$  \\ \midrule \\ [-1.1em]
50 000 & 91.49, [91.40, 91.59] & \textbf{91.94}, [91.84, 92.04] & 91.36, [91.28, 91.44] \\ \\[-1.1em]
30 000 & 88.61, [88.47, 88.75] & \textbf{89.41}, [89.31, 89.50] & 88.50, [88.38, 88.61] \\ \\[-1.1em]
20 000 & 85.73, [85.59, 85.87] & \textbf{86.74}, [86.61, 86.87] & 85.82, [85.70, 85.93] \\ \\[-1.1em]
10 000 & 79.25, [79.00, 79.49] & \textbf{80.90}, [80.74, 81.06] & 80.15, [80.02, 80.28] \\ \\[-1.1em]
 5 000 & 70.32, [70.00, 70.63] & \textbf{72.58}, [72.36, 72.79] & 72.03, [71.71, 72.34] \\ \\[-1.1em]
 4 000 & 67.80, [67.55, 68.07] & \textbf{70.12}, [69.81, 70.42] & 69.64, [69.35, 69.94] \\ \\[-1.1em]
 1 000 & 49.22, [48.69, 49.74] & \textbf{51.14}, [50.69, 51.59] & 50.38, [49.92, 50.82] \\ \\[-1.1em]
   500 & 41.55, [41.14, 41.96] & \textbf{42.92}, [42.54, 43.29] & 42.30, [41.88, 42.73] \\ \\[-1.1em]
   100 & 26.98, [25.98, 27.97] & 25.34, [24.63, 26.10] & \textbf{27.53}, [26.59, 28.47] \\ \bottomrule
\end{tabular}
\caption{Average highest test accuracy and 95$\%$ confidence interval of ResNet9 over 20 instances on CIFAR10 with training sets of different sizes (in $\%$)} 
\label{tab:cifar10-comp}
\end{table}

\begin{table}[H]
\centering
\hspace*{-2.8cm} 
\begin{tabular}{cccc}
\toprule
Training set size & ResNet & LAP-ResNet & Regularized ResNet, $\lambda \in \{0.05, 0.2\} $  \\ \midrule \\ [-1.1em]
50 000 & 72.32, [72.08, 72.56] & 72.43, [72.25, 72.61] & \textbf{72.62}, [72.41, 72.83] \\ \\[-1.1em]
25 000 & 64.34, [64.10, 64.57] & 64.34, [64.11, 64.58] & \textbf{64.76}, [64.52, 65.00] \\ \\[-1.1em]
10 000 & 49.27, [48.84, 49.69] & \textbf{50.57}, [50.34, 50.80] & 50.46, [50.19, 50.72] \\ \\[-1.1em]
5 000 & 34.74, [33.90, 35.58] & 37.97, [37.68, 38.27] & \textbf{38.44}, [37.99, 38.89] \\ \\[-1.1em]
1 000 & 15.66, [15.23, 16.08] & \textbf{16.42}, [16.10, 16.75] & 16.03, [15.55, 16.52] \\ \bottomrule
\end{tabular}
\caption{Average highest test accuracy and 95$\%$ confidence interval of ResNet9 over 10 instances on CIFAR100 with training sets of different sizes (in $\%$)} \label{tab:cifar100-comp}
\end{table}

\begin{table}[H]
\centering
\hspace*{-2.4cm} 
\begin{tabular}{cccc}
\toprule
Training set size & ResNeXt & LAP-ResNeXt & Regularized ResNeXt, $\lambda = 0.01$  \\ \midrule \\ [-1.1em]
50 000 & 72.97, [71.79, 74.14] & \textbf{76.11}, [75.32, 76.89] & 75.96, [74.92, 77.01]  \\ \\[-1.1em]
25 000 & 62.55, [60.18,64.92] & \textbf{64.11}, [62.25, 65.96] & 64.10, [62.36, 65.84] \\ \\[-1.1em]
12 500 & 45.90, [43.16, 48.67] & \textbf{48.23}, [46.39, 50.07] & 47.77, [45.93, 49.62] \\ \bottomrule
\end{tabular}
\caption{Average highest test accuracy and 95$\%$ confidence interval of ResNeXt50 over 10 instances on CIFAR100 with training sets of different sizes (in $\%$)} \label{tab:resnext-cifar100}
\end{table}

Finally, we point out that the least action principle acts by speeding up training in the first epochs as seen for the training of ResNeXt50 models on CIFAR100 in Figure \ref{fig:training}. Batch training times are similar for the 3 models in Figure  \ref{fig:training} on the same hardware (around 0.7 seconds).

\begin{figure}[H]
\centering
\makebox[\textwidth][c]{\includegraphics[width=1.02\textwidth]{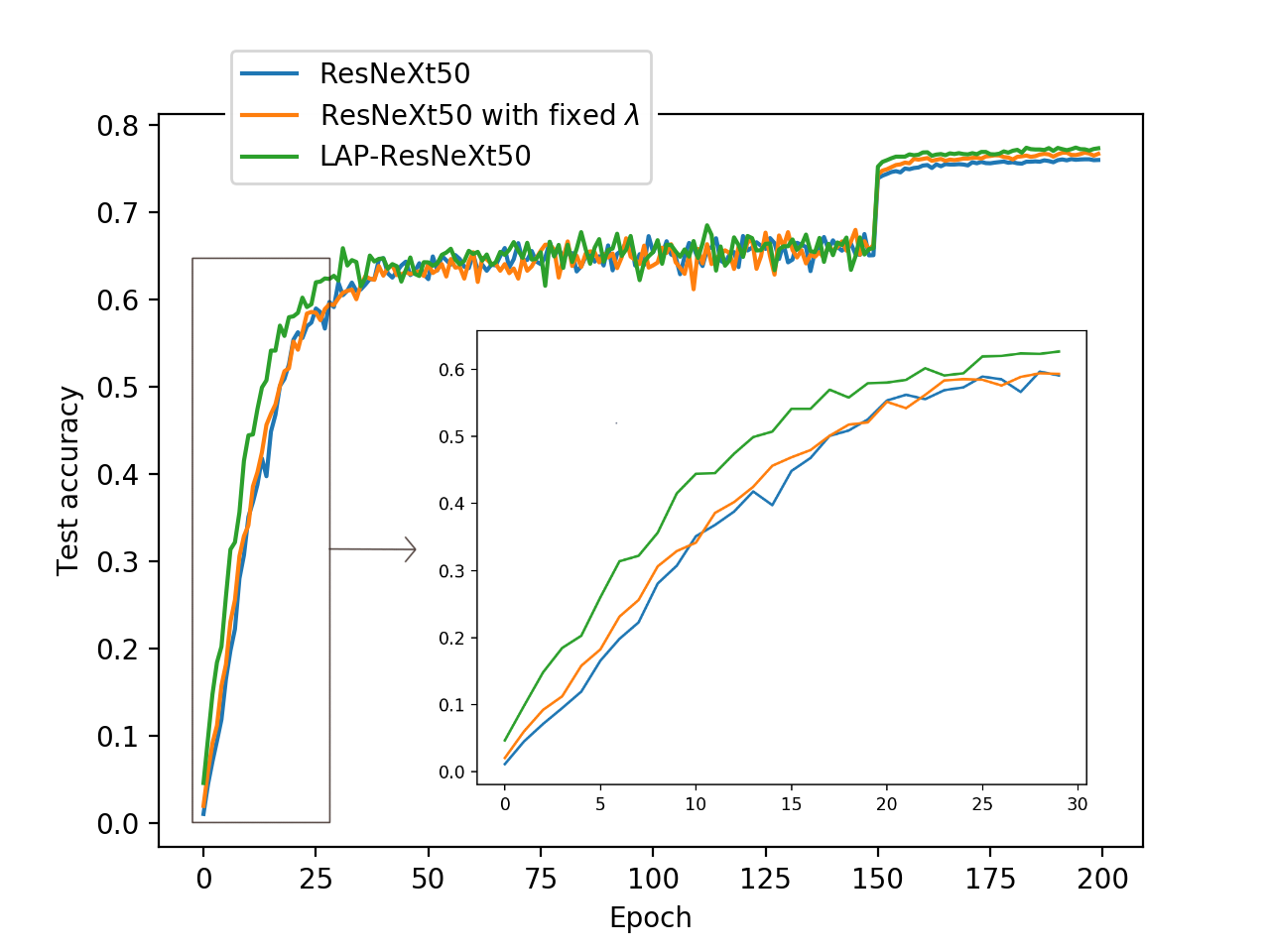}}
\caption{Test accuracy during training of ResNeXt50 models on CIFAR100}
\label{fig:training}
\end{figure}

\subsection{\label{app_subsec:implem}Implementation Details}

These are further implementation details about the experiments in Sections 6 and \ref{app_subsec:lambdaresults}. Orthogonal initialization with gain $0.01$ is used for all ResNet models. Kaiming initialization is used for all ResNeXt models. SGD is used for training all models. The momentum is always set to $0.9$ and weight decay to $0.0001$. For ResNet models, the learning rate is $0.01$ and is divided by 5 at epochs 120, 160 and 200 (when the training goes that far). For ResNeXt models, the learning rate is $0.1$ and is divided by 10 at epochs 150, 225 and 250. Batch size is 128 for all experiments. Architectures of ResNet \cite{he2016b} and ResNeXt \cite{resnext} blocks are standard and exactly as in the references. The ResNets used are single representation ResNets (i.e. containing one residual stage) with 9 blocks. The ResNeXt architecture used is the ResNeXt-50-32$\times$4d from \cite{resnext}.

\subsection{\label{app_subsec:2dresults}Additional Results on 2D Toy Data}

Here is a comparison of our method with batch normalization (BN), which is known to impact the loss surface's geometry ~\cite{batchnorm}.We find that our method cooperates well with BN to improve test accuracy on the same 2D task as in Section 4 when the model is too small (1 block, Table \ref{tab:1000p1b}), too big (100 blocks, Table \ref{tab:1000p100b}), badly initialized ($\mathcal{N}(0,5)$ initialization, Table \ref{tab:biginit}) and when the dataset is small (50 points, Table \ref{tab:50p9b}). LAP-ResNets use $\lambda_0 = 0.1$, $\tau = 0.1$ and $s = 5$.

\begin{table}[H]
\centering
\hspace*{-0.8cm} 
\begin{tabular}{ccc}
\toprule
& No batch normalization & Batch normalization  \\ \midrule \\ [-1.1em]
ResNet & 76.6, [73.1, 80.2] &  75.4, [72.3, 78.6] \\ \\[-1.1em]
Regularized ResNet, $\lambda = 0.005$ & 76.5, [73.0, 80.0] &  75.6, [72.2, 78.9] \\ \\[-1.1em]
LAP-ResNet & 82.1, [79.5, 84.7] &  \textbf{84.6}, [81.5, 87.6] \\ \bottomrule
\end{tabular}
\caption{Average test accuracy and 95$\%$ confidence interval over 100 instances on the circles 2D dataset with 1000 points and 1 block (in $\%$)} 
\label{tab:1000p1b}
\end{table}

\begin{table}[H]
\centering
\hspace*{-0.8cm} 
\begin{tabular}{ccc}
\toprule
& No batch normalization & Batch normalization  \\ \midrule \\ [-1.1em]
ResNet & 89.1, [87.2, 91.00] & 99.4, [99.0, 99.8] \\ \\[-1.1em]
Regularized ResNet, $\lambda = 0.09$ & 69.7, [65.6, 73.7] &99.5, [98.9, 1.00] \\ \\[-1.1em]
LAP-ResNet & 75.7, [72.8, 78.6] & \textbf{99.8}, [99.7, 1.00] \\ \bottomrule
\end{tabular}
\caption{Average test accuracy and 95$\%$ confidence interval over 100 instances on the circles 2D dataset with 1000 points and 100 blocks (in $\%$)} 
\label{tab:1000p100b}
\end{table}

\begin{table}[H]
\centering
\hspace*{-0.8cm} 
\begin{tabular}{ccc}
\toprule
& No batch normalization & Batch normalization  \\ \midrule \\ [-1.1em]
ResNet & 90.2, [88.8, 91.5]  &  98.0, [97.2, 98.8] \\ \\[-1.1em]
Regularized ResNet, $\lambda = 0.04$ & 89.7, [88.2, 91.3] & \textbf{99.7}, [99.5, 99.9] \\ \\[-1.1em]
LAP-ResNet & 79.1, [75.3, 83.0] & 99.4, [99.0, 99.8]    \\ \bottomrule
\end{tabular}
\caption{Average test accuracy and 95$\%$ confidence interval over 100 instances on the circles 2D dataset with a $\mathcal{N}(0,5)$ initialization (in $\%$)} 
\label{tab:biginit}
\end{table}

\begin{table}[H]
\centering
\hspace*{-0.8cm} 
\begin{tabular}{ccc}
\toprule
& No batch normalization & Batch normalization  \\ \midrule \\ [-1.1em]
ResNet & 88.2, [85.5, 90.1] & 92.9, [90.9, 94.9] \\ \\[-1.1em]
Regularized ResNet, $\lambda = 0.04$ & 93.5, [91.4, 95.6] & 94.4, [92.4, 96.3] \\ \\[-1.1em]
LAP-ResNet &  95.8, [94.0, 97.6] &  \textbf{96.0}, [94.6, 97.3] \\ \bottomrule
\end{tabular}
\caption{Average test accuracy and 95$\%$ confidence interval over 100 instances on the circles 2D dataset with 50 points and 9 blocks (in $\%$)} 
\label{tab:50p9b}
\end{table}

\end{document}